\documentclass[lettersize, journal]{IEEEtran}
\usepackage{amsmath,amsfonts}
\usepackage{array}
\usepackage[caption=false,font=normalsize,labelfont=sf,textfont=sf]{subfig}
\usepackage{textcomp}
\usepackage{stfloats}
\usepackage{url}
\usepackage{verbatim}
\usepackage{graphicx}
\usepackage{cite}
\usepackage{lmodern}
\usepackage{amssymb}
\usepackage{amsmath}
\usepackage{bm}
\usepackage[linesnumbered,ruled,lined]{algorithm2e} 
\usepackage{algorithmic}
\usepackage{times}
\usepackage{tabularx}
\usepackage{booktabs}
\usepackage[english]{babel}
\usepackage{blindtext}
\usepackage{xcolor}
\usepackage[colorlinks=true, linkcolor=blue, citecolor=blue, urlcolor=blue]{hyperref}
\usepackage{multirow}
\usepackage{makecell}

\newtheorem{proof}{Proof}
\newtheorem{theorem}{Theorem}

\newtheorem{assumption}{Assumption}

\begin{document}
\title{Task-Oriented Formation Decision via Reinforcement Learning: Herding an Attacking Swarm}

\author{Zhaozong Wang, Guibin Sun, Jinyong Chen, and Rui Zhou
\thanks{This work was supported in part by the STI 2030-Major Projects under Grant 2022ZD0208804, in part by the National Natural Science Foundation of China under Grants 62473017 and 62503028. \emph{(Corresponding author: Rui Zhou.)}}
\thanks{The authors are with the School of Automation Science and Electrical Engineering, Beihang University, Beijing 100191, China (e-mail: wangzz0925@buaa.edu.cn; sunguibinx@buaa.edu.cn;  chenjinyong@buaa.edu.cn; zhr@buaa.edu.cn).}}

\markboth{IEEE Transactions on Automation Science and Engineering}%
{Shell \MakeLowercase{{Wang et al.}}: Formation Decision Using Reinforcement Learning for Herding an Adversarial Swarm}

\maketitle

\begin{abstract}
    Multi-robot systems can accomplish tasks that are difficult for a single robot by organizing into task-specific formations. 
    Different from existing studies on multi-robot shape formation, we here study the task-oriented formation decision problem, with a focus on the herding task. 
    This task is challenging due to the attackers' superior maneuverability and their unknown strategies. 
    To address these challenges, we propose the following novel results. 
    First, we encode the formation shape using a low-dimensional parameter vector. 
    This parametric representation reformulates the formation decision as a parameter optimization problem, thereby resolving the limited flexibility of predefined shapes. 
    By optimizing these formation parameters, the defenders' maneuverability disadvantage is mitigated through a formation shape that continuously adapts to task requirements. 
    Second, we develop a reinforcement learning-based policy to regulate the formation parameters. 
    Trained offline in simulations covering diverse attacking strategies, the learned policy can effectively handle adversarial unpredictability during online deployment. 
    Comparative simulations against three baselines demonstrate that our method can successfully accomplish challenging herding tasks.
    Additional scalability simulations further verify its applicability to simulated scenarios involving dozens of robots. 
    We also validate the practical feasibility of our method on a physical robotic platform with 3 attackers and 7 defenders.
    \footnote{\label{fn:pdta}\url{https://github.com/WZZ-Wangzhaozong/formation_decision_herding}}
\end{abstract}

\def\abstractname{Note to Practitioners}
\begin{abstract}
In many applications, it is necessary for a robot swarm to form a task-specific formation, such as herding attackers away from critical facilities. 
However, most existing herding methods typically rely on the superior maneuverability of robots or explicit dynamics of the target, which may not hold in adversarial settings. 
To address these limitations, this paper proposes a formation decision method that coordinates varying numbers of robots. 
The defenders execute this policy through a two-level hierarchical framework. 
First, at the formation level, an appropriate formation shape is determined in real time, and each defender is assigned a desired position. 
Second, at the robot level, each defender computes its control law to reach its assigned position while ensuring collision avoidance. 
In addition, a negotiation protocol is adopted to ensure the applicability of our method in distributed systems. 
More details of our method and experimental results are presented in this paper.
\end{abstract}

\begin{IEEEkeywords}
    Task-oriented formation decision, herding problem, reinforcement learning. 
\end{IEEEkeywords}

\begin{figure}[t]
    \centerline{\includegraphics[width=19pc]{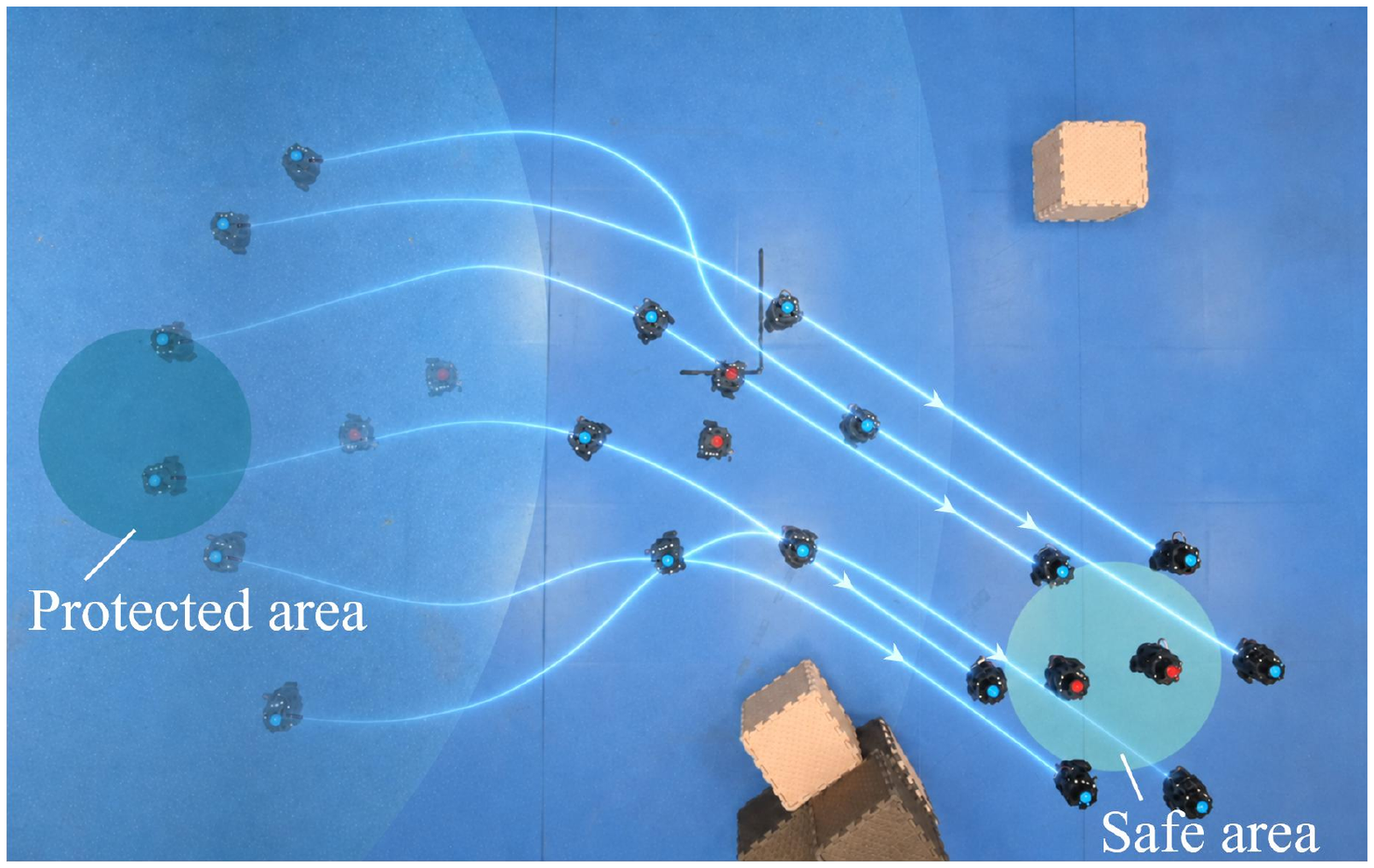}}
    \caption{A group of blue defenders herding two red attackers. 
    Robots with higher transparency correspond to earlier time steps.}
    \label{Fig0}
\end{figure}

\section{Introduction}
\IEEEPARstart{T}{HE} advancement in the theory of multi-robot systems (MRSs) has brought various applications~\cite{sun2023mean}, such as cargo transportation~\cite{Thomas2020dilevery} and area monitoring~\cite{ZHENG2025112216, TAN2026TIE}. 
In particular, multi-robot shape formation has been widely studied, as it effectively enhances collaboration efficiency~\cite{sun2025IIEM}. 
Recent studies have further explored formation switching and reconfiguration to enhance the capabilities of MRSs in complex tasks~\cite{Hu_TCNS, Xing2024UAV}.
For example, in the herding task, a defensive formation can close in around the attackers to encircle them. 
However, while most existing studies still focus on how to drive robots to form a desired shape, the problem of determining the shape for specific tasks has rarely been investigated. 
Specifically, it remains a critical challenge to dynamically adjust the defensive formation to surround highly maneuverable attackers. 

Therefore, this paper investigates the formation decision problem for herding an attacking swarm. 
Existing multi-robot collaboration methods for this task can be categorized into two primary classes. 
The first class comprises distributed motion planning methods~\cite{deng2020multi, chen2024multipleherding, Unmannedsystem1}. 
Rather than strictly enforcing geometric formations, these methods generate control policies at the robot level, where each robot independently herds the target toward the destination. 
The second class consists of formation-based methods~\cite{pierson2017controlling, chipade2021multiagent}.
In these methods, the MRS switches among a set of predefined formations to ensure target containment and subsequently plans trajectories to transport the contained target. 

Despite these efforts, two challenges remain open for further investigation, as outlined below. 
The first challenge is to address the defenders' maneuverability disadvantage relative to the attackers. 
{Rather than} enhancing their maneuverability, a more practical strategy is to organize defenders into a formation specifically designed for herding~\cite{pierson2017controlling, chipade2021multiagent}. 
For example, an arc-shaped formation was proposed in~\cite{chipade2021multiagent} to act as a barrier between the attackers and the protected area. 
This method divides the entire herding process into three separate stages and adopts a predefined formation in each stage. 
As a result, the methods in~\cite{pierson2017controlling, chipade2021multiagent} constrain the defensive formation to predefined shapes rather than allowing it to adapt, which inherently limits its responsiveness to evolving task requirements. 
Therefore, although these methods alleviate the demands on defenders, they fundamentally rely on superior maneuverability to offset the limited formation flexibility. 

The second challenge is how to adjust the formation shape when the attackers' maneuvering strategies are unpredictable. 
In contrast to the aforementioned stage-based switching strategies in~\cite{pierson2017controlling, chipade2021multiagent}, a recent study on formation decision establishes a mapping between task requirements and formation shapes~\cite{Li2025DEFORM}. 
Subsequently, the MRS selects the most appropriate shape from a predefined formation library. 
However, the method in~\cite{Li2025DEFORM} exhaustively identifies only four shapes for an MRS comprising four robots, which limits its scalability to larger groups. 
In addition, to improve decision foresight, most existing studies explicitly model the motion of the target and utilize this to formulate policies for the robots~\cite{chen2024multipleherding, song2021herding}. 
However, the strategies of attackers are typically unknown, making it difficult to predict their future behaviors. 
As a result, enabling the learned policy to respond effectively to unknown attacking strategies during online deployment remains a key challenge. 
Recent studies have increasingly adopted learning-based methods to address environmental uncertainty in challenging applications, such as multi-robot cooperation in dense obstacle environments~\cite{GuWang_UAV}.
To address these challenges, this paper makes the following contributions: 
\begin{itemize}
    \item[1)] We encode the formation shape using five parameters. 
    {Unlike} the methods that linearly transform the formation shape using stress matrices~\cite{Zhao2018Affine, ZHANG2025111935}, our low-dimensional parametric representation enables the formation to smoothly switch between open and closed configurations. 
    This representation transforms the defenders' formation decision into a parameter optimization problem. 
    By optimizing these five parameters, the defensive formation can progressively enhance its surrounding advantage over the attackers while preserving a barrier against them. 
    As a result, the defenders' maneuverability disadvantage can be mitigated by a formation shape that continuously adapts to task requirements. 
    \item[2)] We develop a reinforcement learning (RL)-based policy that regulates the five formation parameters. 
    This design liberates the policy to focus exclusively on high-level formation decisions, as the physical constraints are offloaded to the low-level robot controllers. 
    Specifically, we first formulate a state-dependent reward function to adaptively prioritize different herding objectives during policy optimization. 
    Second, beyond the spatial relationships between the two sides, the observable velocities of the attackers are incorporated into the state space to capture their strategies. 
    With these designs, the policy is trained offline in simulations covering a wide range of attacking behaviors. 
    Therefore, the learned policy can respond to unseen strategies during online deployment. 
    \item[3)] To validate our method, we benchmark it against three baselines that directly control individual robots.
    Comparative simulations highlight the superior performance of our method in addressing the challenging herding task. 
    Furthermore, numerical simulations show that our method can scale to scenarios involving dozens of robots. 
    Finally, real-world experiments using 10 robots demonstrate the practical feasibility of the proposed method.
\end{itemize}

The remainder of this paper is organized as follows. 
Section~\ref{sec_review} reviews related work. 
Section~\ref{sec_preliminary} formulates the problem, and Section~\ref{sec_formation_controller} presents the proposed method. 
Sections~\ref{sec:simulations} and~\ref{sec:experiments} provide simulation and experimental results, respectively. 
Section~\ref{sec:conclusion} concludes this paper.

\section{Related Work}\label{sec_review}
This section provides an overview of related work, including research on herding problems, formation representation, formation control, and learning-based robotic decision-making.

\subsection{Research on Herding Problems}
Herding problems have been extensively explored in the literature, with herding typically referring to guiding a target toward a desired destination. 
A closely related line of research is the pursuit-evasion (PE) game, in which robots are tasked with tracking and eliminating evasive targets~\cite{2021Feedback, Liu2025Observer}. 
These studies have developed effective pursuit policies using either traditional control methods~\cite{2021Feedback} or RL algorithms~\cite{Liu2025Observer}. 
Different from the PE game, the herding problem prioritizes safeguarding a protected area over actively pursuing the target. 
To satisfy this requirement, the PE framework is extended in~\cite{deng2020multi}, where only one robot is deployed to guard the protected area, while the others continue to pursue the target. 

Although the method proposed in~\cite{deng2020multi} is elegantly designed, it assumes that the target actively limits its motion to areas outside the robot's occupied region. 
A more practical solution is to compress the targets' reachable area and eliminate all feasible intrusion paths~\cite{pierson2017controlling, chen2024multipleherding}. 
However, these studies still rely on assumptions that are often unreliable in adversarial settings. 
For instance, the work in~\cite{pierson2017controlling} assumes that the maneuvering strategy of the target is known. 
In addition, the method in~\cite{chen2024multipleherding} assumes the target remains stationary unless subjected to repulsion by robots. 
Some studies still require the robots to possess superior maneuverability~\cite{song2021herding, deng2020multi}. 
Few existing studies have addressed this problem through formation decisions.
A notable exception is the ``StringNet Herding'' method~\cite{chipade2021multiagent}, which coordinates robots into a geometric formation that switches among a set of predefined shapes. 
However, this method limits the responsiveness of the MRS due to the formation constraints. 
To this end, this paper aims to flexibly fine-tune the formation and optimize it in real time. 

\subsection{Formation Representation}
In this paper, formation decision refers to determining the appropriate shape for the MRS, for which formation representation provides the foundation. 

Existing studies commonly employ three classes of formation representation methods. 
The first class, which is also the most intuitive, defines a shape using a number of points equal to that of robots. 
A more concise variation is to designate a small subset of robots as the vertices of the shape~\cite{dong2018theory, yang2023self}. 
However, the representational fidelity of these methods depends heavily on the number of points used. 
Because their dimensionality scales with the number of robots, these point-based methods are unsuitable for subsequent decision-making in large-scale MRSs.

The second class, commonly referred to as density control, models the spatial distribution of robots using mathematical functions~\cite{SINIGAGLIA2025112218, Lei2024CSL}. 
These methods generate a velocity field based on the error between the target and actual density functions, and each robot is driven by this field. 
Nevertheless, the mathematical forms of these functions are highly nonlinear, making them intractable to use directly as variables for formation decision. 

The third class of methods employs a fixed set of parameters to manage the formation shape. 
A representative approach is affine formation control~\cite{Zhao2018Affine}, which scales, rotates, and shears the formation via an affine transformation matrix. 
More recently, the authors in~\cite{ZHANG2025111935} extend the affine formation control by incorporating additional adjustable parameters, thereby enabling a broader range of feasible formation shapes to be explored. 
One limitation of these matrix-based methods is that the transformed formation still depends on its initial configuration, rather than being uniquely determined by the adjustable parameters. 
For specific tasks, a few existing studies parameterize the explicit features of the formation shape~\cite{yang2022autonomous, huang2024cooperative}. 
For example, the method in~\cite{yang2022autonomous} manipulates the horizontal-to-vertical ratio of a ribbon-like swarm to navigate through narrow channels. 

In summary, this paper aims to develop a concise and efficient parametric representation tailored to the herding task. 

\subsection{Formation Control}
Formation control and formation decision are closely related but fundamentally distinct problems. 
The primary objective of formation control is to drive robots to achieve and maintain a given geometric pattern. 
A recent review systematically summarizes representative modeling and control techniques for UAV systems, which provide an important foundation for reliable vehicle and formation control~\cite{Gedefaw2025UAVReview}. 
Recent studies have further improved the safety, robustness, and fault tolerance of formation execution. 
For example, range-only bearing estimation and control barrier functions have been combined for safety-critical formation control~\cite{Chen2025IoTJ}. 
At the vehicle level, robust adaptive controllers have also been developed to ensure safe attitude control of aerial vehicles under faults and uncertainties~\cite{Fu2025AST}. 
These studies improve the reliability of formation execution but generally assume that the desired formation geometry or reference trajectory has already been specified.

Representative formation control methods include leader--follower structures~\cite{TAN2025TASE, Li2025DEFORM}, in which selected leaders perform global planning while followers maintain prescribed relative relationships. 
Some studies further adjust specific formation characteristics according to environmental conditions~\cite{quan2023robust, Liu2024Graph, yang2022autonomous, huang2024cooperative}. 
For example, the formation size can be adjusted according to passage width to ensure safe traversal~\cite{quan2023robust}, while the formation orientation can be rotated to align inter-robot gaps with approaching obstacles~\cite{Liu2024Graph}. 
However, these methods generally modify predefined geometric variables through predetermined rules, rather than jointly {determining them} through a formation decision policy. 

The formation decision considered in this paper is beyond such predefined geometric adjustments. 
It treats the overall formation geometry as an independent high-level decision variable and determines multiple coupled formation parameters according to the evolving task requirements. 
Although learning-based methods have enhanced the autonomous capabilities of individual robots~\cite{TAN2026TIE} and facilitated multi-robot cooperation in complex environments~\cite{Gu2026TCCN}, they primarily learn robot-level actions rather than formation-level decisions. 
In addition, few studies have formulated the formation decision problem for adversarial multi-robot herding tasks. 
To address this gap, this paper represents the defensive formation using a low-dimensional parameter vector and develops a policy to update the formation parameters.

\subsection{Learning-Based Robotic Decision-Making}
Learning-based methods have recently shown strong potential in complex robotic decision-making problems. 
In~\cite{Yuan2023FITEE}, Transformer-based reinforcement learning methods for sequential decision-making are systematically reviewed. 
To improve learning efficiency, a sample-efficient backtrack temporal-difference deep reinforcement learning method is developed in~\cite{Liu2025KBS}. 
Moreover, reinforcement learning from human feedback is further investigated through cooperative policy-reward optimization for LLMs~\cite{Liu2026ESWA}.

Learning-based methods have also been applied to multi-robot and constrained control scenarios. 
For example, the authors in~\cite{GuWang_UAV} propose a constrained reinforcement learning approach for cooperative control of multi-UAV systems in dense obstacle environments. 
In addition, Bayesian optimization is used to develop an online escape strategy for planetary rovers under entrapment conditions~\cite{Guo2024JFR}. 
Perception-aware decision-making has also been investigated in robotic manipulation. 
The authors in~\cite{Li2025TASE} propose an occlusion-aware grasping method that explicitly considers inter-object occlusion relationships and estimates feasible grasping poses in complex multi-object scenes. 
Although these studies demonstrate the effectiveness of learning-based decision-making, they mainly focus on actions, trajectories, control inputs, or policy-reward optimization. 
They do not directly address how to adaptively determine formation-level geometric parameters for adversarial multi-robot herding. 
To bridge this gap, this paper formulates herding-oriented formation decision as a low-dimensional parameter optimization problem and learns a policy to update the formation parameters in real time.

\begin{table}[t]
\centering
\caption{Notations and Their Meanings.}
\label{table:notations_herding}
\begin{tabular}{ll}
\hline
\textbf{Notation}& \textbf{Description} \\ \hline
\multicolumn{2}{l}{\textit{System Sets and Variables:}} \\
$\mathcal{V}_{\rm d}, \mathcal{V}_{\rm a}$ & Sets of defenders and attackers, respectively \\
$\mathcal{O}, \mathcal{P}, \mathcal{S}$ & Obstacles, protected area, and safe area  \\
$N_{\rm d}, N_{\rm a}$ & Number of defenders and attackers \\
$\bm{p}, \bm{v}, \bm{u}$ & Position, velocity, and acceleration vectors of a robot  \\
$v_{\rm {max}}, u_{\rm {max}}$ & Maximum velocity and acceleration constraints  \\
$\rho_{\rm p}, \rho_{\rm s}$ & Radius of the protected area and safe area  \\ \hline
\multicolumn{2}{l}{\textit{Formation Parameters ($\bm \theta$):}} \\
$\bm{p}_{\rm c}$ & Translation parameter relative to the protected area \\
$\varphi$ & Parameter defining formation orientation  \\
$\zeta$ & Parameter for inter-robot distance  \\
$\beta$ & Parameter defining formation opening  \\
$\hat{\bm \theta}_i$ & Defender $i$'s local estimate of formation parameters  \\ \hline
\multicolumn{2}{l}{\textit{Reinforcement Learning (RL) Components:}} \\
$\bm s, \bm a, r$ & State space, action space, and reward function \\
$r_{\rm att}$ & Radius of the attacking swarm\\
$\bm {v}_{\rm att}, \varphi_{{\bm v}_{\rm att}}$ & Average velocity and direction of the attacking swarm \\
$\varphi_{\rm nea}$ & Non-escape angle quantifying surrounding advantage  \\
$r_{\rm ter}, f(\cdot)$ & Terminal and immediate reward terms \\
$\gamma$ & Discount factor in the DDPG framework  \\ \hline
\multicolumn{2}{l}{\textit{Control and Negotiation Parameters:}} \\
$\bm{p}_i^*$ & Desired position for defender $i$  \\
$\bm{u}_i^{\rm c}, \bm{u}_i^{\rm f}, \bm{u}_i^{\rm a}$ & Control terms: connectivity preservation, formation \\ & maintenance, and collision avoidance \\
$\mathcal{G}, \mathcal{E}, L(\mathcal{G})$ & Communication graph, edge set, and Laplacian matrix \\
$c_\theta$ & Negotiation gain for parameter consensus \\
$\lambda_2(\mathcal{G})$ & Algebraic connectivity (second-smallest eigenvalue) \\ \hline
\end{tabular}
\end{table}

\section{Preliminaries and Problem Formulation}\label{sec_preliminary}
This section first introduces the considered herding scenario and the capabilities of both attackers and defenders. 
Second, we formulate the herding problem. 

\subsection{Preliminaries}
We consider a scenario involving defenders, attackers, obstacles, and protected and safe areas, as shown in Fig.~\ref{Fig1_defender_phase}. 
This scenario is confined to a square region spanning $[-{r_{\rm env}}, {r_{\rm env}}]^2$.
The set of defenders is denoted by $\mathcal{V}_{\rm{d}}=\{1, 2, \dots, N_{\rm{d}}\}$, where $N_{\rm{d}}$ is the number of defenders. 
Similarly, the sets of attackers and obstacles are denoted by $\mathcal{V}_{\rm{a}}=\{a_1,a_2, \dots, a_{N_{\rm{a}}}\}$ and $\mathcal{O}=\{o_1, o_2, \dots, o_{N_{\rm{o}}} \}$, respectively. 
In practical scenarios, critical facilities may have irregular shapes. 
For generality, both the protected and safe areas are approximated by minimum bounding circles, defined as $\mathcal{P}=\{\bm{p}\in\mathbb{R}^2:\Vert \bm{p}-\bm{p}_{\rm p}\Vert\leq\rho_{\rm p}\}$ and $\mathcal{S}=\{\bm{p}\in\mathbb{R}^2:\Vert \bm{p}-\bm{p}_{\rm s}\Vert\leq\rho_{\rm s}\}$, where $(\bm{p}_{\rm p}, \rho_{\rm p})$ and $(\bm{p}_{\rm s}, \rho_{\rm s})$ are their centers and radii, respectively. 
The sensing, maneuvering, and communication capabilities of both sides are described as follows.

\subsubsection{Sensing Ability}
The positions of the protected area and the defenders are assumed to be known to the attackers throughout their intrusion process. 
On the defender side, a navigation system is deployed in $\mathcal{P}$ to detect the positions and velocities of attackers within the area $\mathcal{Z}_{\rm d}^{\rm sen} = \{\bm{p}\in \mathbb{R}^2:\Vert \bm{p} - \bm{p}_{\rm p}\Vert \leq r_{\rm env}\}$ and send this information to the defenders. 
To reflect real-world sensor limitations, the detected information is modeled by incorporating distance-dependent Gaussian noise. 
Specifically, the observations satisfy $\widetilde{\bm p} \sim \mathcal{N}(\bm{p}, \sigma_p^2(r) \bm{I})$ and $\widetilde{\bm v} \sim \mathcal{N}(\bm{v}, \sigma_v^2(r) \bm{I})$, where $r = \Vert \bm p - \bm{p}_{\rm p}\Vert$ denotes the relative distance to the navigation system. The distance-dependent standard deviation $\sigma(r)$ is modeled as a piecewise function:
\begin{equation*}
\sigma(r) = \begin{cases}
0.0, & r \leq r_1^{\rm sen} \\
\sigma_{v/p} \cdot \frac{r - r_1^{\rm sen}}{r_2^{\rm sen} - r_1^{\rm sen}}, & r_1^{\rm sen} < r \leq r_2^{\rm sen} \\
\sigma_{v/p}, & r > r_2^{\rm sen}
\end{cases}
\end{equation*}
where $\sigma_{v/p}$ denotes the maximum standard deviation. 
The thresholds $r_{\rm 1}^{\rm sen}$ and $r_{\rm 2}^{\rm sen}$ are set to $\frac{r_{\rm env}}{3}$ and $\frac{2r_{\rm env}}{3}$ in this paper. 

\subsubsection{Maneuverability}
The robots on both sides follow double-integrator dynamics, $\dot{\bm{p}} = \bm{v}$ and $\dot{\bm{v}} = \bm{u}$, where $\bm{u}=[u_x,u_y]^\top$ denotes the acceleration input. 
The acceleration of each robot is constrained by $\Vert \bm{u}\Vert \leq u_{\rm max}$.
In addition, the velocities are subject to the constraint $\Vert \bm{v}\Vert \leq v_{{\rm max}}$. 
The attackers exhibit superior maneuverability, characterized by {$v^{\rm a}_{\rm max} > v^{\rm d}_{\rm max}$} and {$u^{\rm a}_{\rm max} > u^{\rm d}_{\rm max}$}, where the subscripts $\rm a$ and $\rm d$ denote the attackers and defenders, respectively. 

\subsubsection{Communication Ability}
The attackers can communicate with each other regardless of distance. 
Conversely, defenders are restricted to communicating solely with their immediate neighbors under a distributed architecture. 
\begin{figure}[t]
    \centerline{\includegraphics[width=21pc]{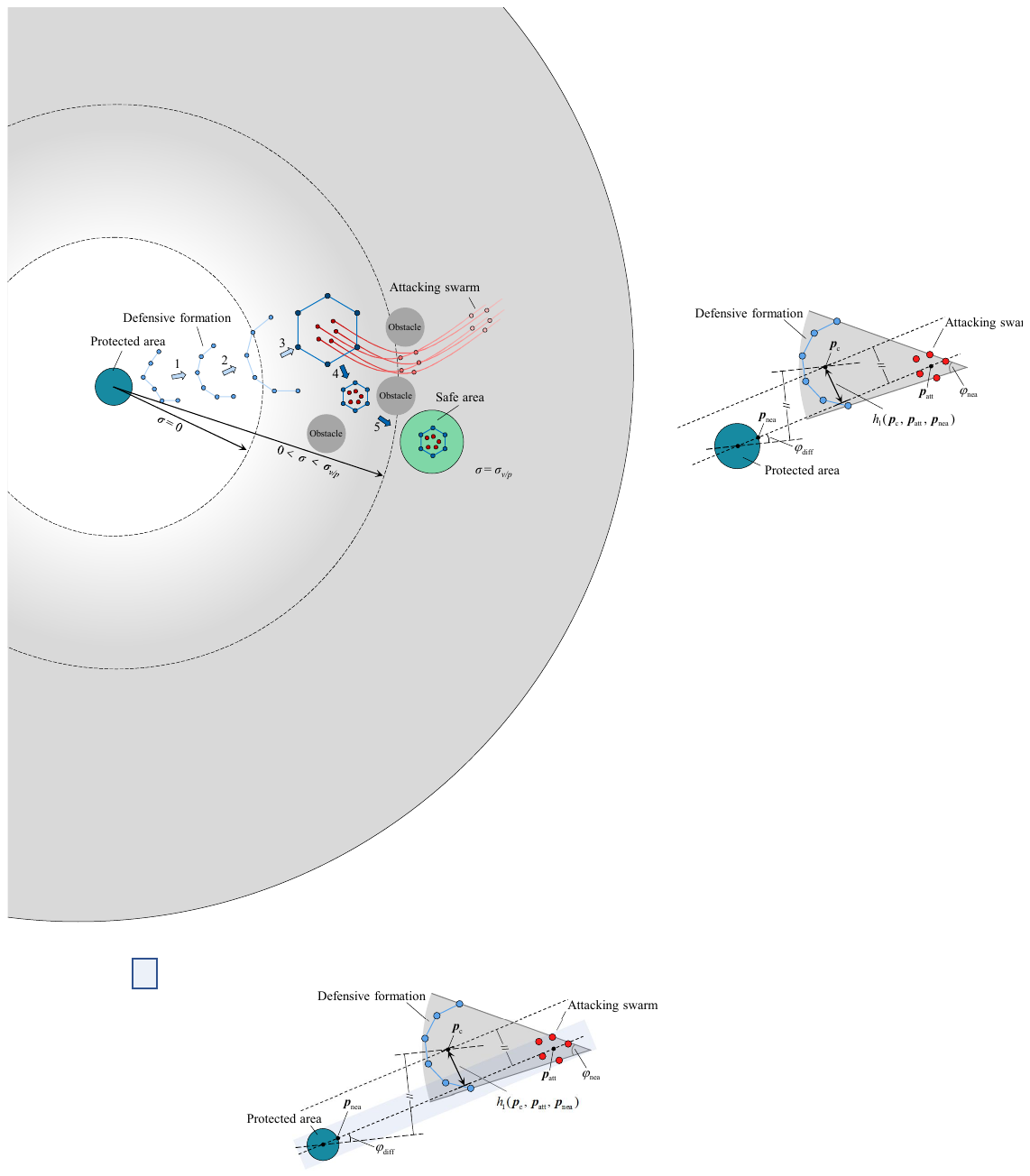}}
    \caption{Herding process. The process consists of two phases: the defending phase comprising Steps 1--3 and the guiding phase comprising Steps 4--5. Step 1: Defenders gather to form an open-arc formation. Step 2: The formation scales to enhance its surrounding capability. Step 3: The formation surrounds the attacking swarm. Step 4: The closed-loop formation transports the attackers. Step 5: The formation reaches the safe area $\mathcal{S}$.}
    \label{Fig1_defender_phase}
\end{figure}

\subsection{Problem Formulation}
This paper aims to use defenders to herd an attacking swarm into a designated safe area before the swarm enters the protected area. 
As depicted in Fig.~\ref{Fig1_defender_phase}, the herding process is divided into the defending and guiding phases. 
During the defending phase, the defensive formation adapts its configuration to eventually surround the attackers. 
Once the attackers are surrounded, the process transitions into the guiding phase to transport all attackers to the safe area. 
In the following, we present two assumptions required for our method.

\begin{assumption}\label{Ass_1}
    We assume that adjacent defenders create an impassable barrier for the attackers if the distance between the adjacent defenders is less than a given threshold $r_{\rm d}^{\rm str}$. 
\end{assumption}
\begin{assumption}\label{Ass_3}
    Consistent with~\cite{chipade2021multiagent,chen2024multipleherding}, we assume that a navigation system is deployed in $\mathcal{P}$ to perceive all the attackers and broadcast this information to all defenders. 
\end{assumption}

In the herding scenarios, the attackers perform risk-aware navigation, enabling them to proactively avoid approaching defenders while advancing toward the protected area. 
The maneuvering strategy of the attackers is given by
\begin{equation}
    \begin{split}
        \bm{u}_{a_j} =& k_{\rm ad} g(\Vert \bm{r}_{a_j}^{\rm str} \Vert,r_{\rm ad}^{\rm safe},r_{\rm ad}^{\rm avo}) + k_{\rm ap}\dfrac{\bm{p}_{a_j}-\bm{p}_{\rm p}}{\Vert \bm{p}_{a_j}-         \bm{p}_{\rm p}\Vert}  \\
        &+ k_{\rm aa} g(\Vert \bm{p}_{a_j}-\bm{p}_{a_j^{'}}\Vert,r_{\rm aa}^{\rm safe},r_{\rm aa}^{\rm avo}) 
    \end{split}
    \label{Equ_attacker_strategy}
\end{equation}
where the constants $k_{\rm ad}$, $k_{\rm ap}$, and $k_{\rm aa}$ represent the weighting factors for avoiding defenders, entering the protected area, and avoiding internal collisions, respectively; $r_{\rm aa}^{\rm avo}$, $r_{\rm aa}^{\rm safe}$, $r_{\rm ad}^{\rm avo}$, and $r_{\rm ad}^{\rm safe}$ define the distance thresholds for activating the corresponding potential-field terms. 
Moreover, $a_j^{'}$ denotes the attacker closest to $a_j$, and $\Vert \bm{r}_{a_j}^{\rm str} \Vert$ is the minimum distance between $a_j$ and the defensive formation. 
The potential-field function $g$ is defined in Eq.~\eqref{Equ_weight}:
\begin{equation}
    \begingroup
    g(r,r_1,r_2) = \begin{cases}
        \frac{1}{r-r_1} \left[1+\cos\left(\pi\frac{r-r_1}{r_2-r_1}\right)\right], & r_1< r\leq r_2 \vspace{3pt} \\
        0,                                                                      & r>r_2
    \end{cases}
    \endgroup
    \label{Equ_weight}
\end{equation}
where $r_1$ and $r_2$ are given constants used to adjust the shape of $g$. 
Although the attacking strategy is fixed in the form of {Eq.~\eqref{Equ_attacker_strategy}}, we consider multiple combinations of these hyperparameters to enrich the diversity of attacking behaviors.
In addition, the parameters $k_{\rm ad}$, $k_{\rm ap}$, $k_{\rm aa}$, $r_{\rm aa}^{\rm avo}$, $r_{\rm aa}^{\rm safe}$, $r_{\rm ad}^{\rm avo}$, and $r_{\rm ad}^{\rm safe}$ remain unknown to the defenders, rendering the attackers' behaviors unpredictable. 
This setting requires our method to operate solely on observable information about the attackers.

\section{Formation Decision Method for Herding Tasks}\label{sec_formation_controller}
In this section, we first introduce the proposed parametric representation of the defensive formation. 
Second, we develop an RL-based herding policy that regulates the formation parameters in response to task requirements. 

\subsection{Parametric Representation for Defenders}
The configuration of the defensive formation, including its position, shape, and size, is represented by the vector 
\begin{equation*}
    \bm{\theta} = \begin{bmatrix}
        \bm{p}_{\rm c}^{\top} & \varphi & \zeta & \beta
    \end{bmatrix}^{\top}
\end{equation*}
where $\bm{p}_{\rm c} \in \mathbb{R}^2$ represents the translation, while $\varphi$, $\zeta$, and $\beta$ are the parameters defining orientation, scaling, and opening, respectively. 
A visualization of this representation is shown in Fig.~\ref{Fig2_parametric_formation}. 
It should be noted that $\bm{p}_{\rm c}$ is defined relative to the center of the protected area. 
The policy regulating the above parameters during the defending phase is learned using an RL algorithm, as detailed in the next subsection.
\begin{figure}[h]
    \centerline{\includegraphics[width=21.5pc]{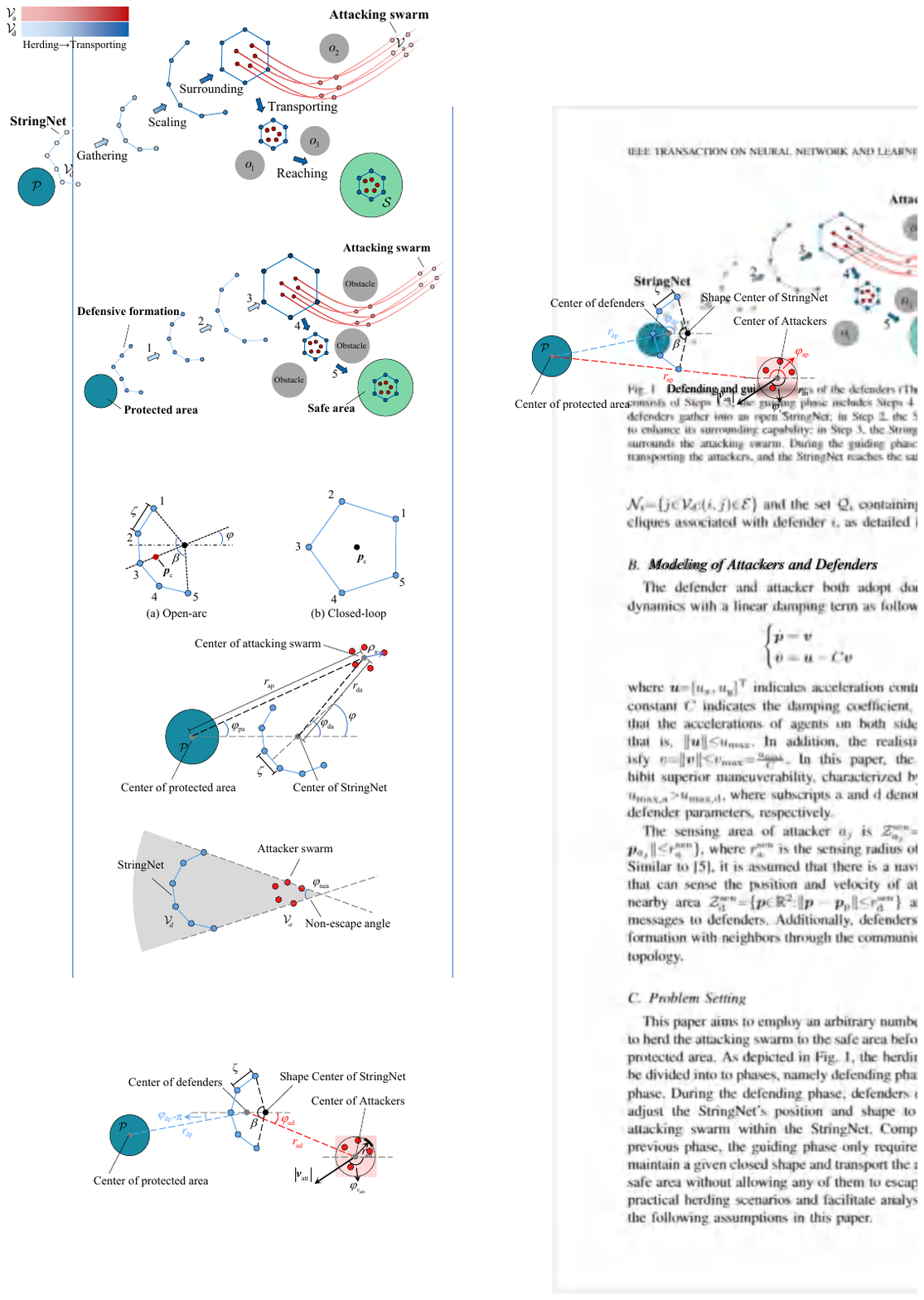}}
    \caption{Illustration of open-arc and closed-loop formations.}
    \label{Fig2_parametric_formation}
\end{figure}

Once all the attackers are surrounded by defenders, the defending phase transitions to the guiding phase. 
As long as the attackers remain contained within the closed-loop formation, there always exists a way to transport them to the safe area. 
To ensure this, we conservatively eliminate the possibility of attacker escape caused by deformation of the defensive formation.
Accordingly, the parameters $\varphi$, $\zeta$, and $\beta$ are kept constant, and the formation only needs to adjust $\bm{p}_{\rm c}$. 
In this setting, the closed-loop formation is treated as a rigid entity with a fixed radius, and the classical velocity obstacle method proposed in~\cite{fiorini1998motion} is employed to adjust ${\dot{\bm{p}}_{\rm c}}$, as detailed below: 
\begin{equation*}
    {\dot{\bm{p}}_{\rm c}} = \mathop{\rm argmin}\limits_{\boldsymbol{v}\notin \mathop{\cup}\limits_{j=1}^{N_{\rm o}}\mathcal{VO}_j} \Vert \boldsymbol{v}-{{\bm{v}}}_{{\rm ref}} \Vert
\end{equation*}
with 
\begin{equation*}
    {{\bm{v}}}_{{\rm ref}} = \frac{\bm{\bar p}_{\rm s}-{\bm{p}}_{{\rm c}}}{\left\lVert  \bm{\bar p}_{\rm s}-{\bm{p}}_{{\rm c}}\right\rVert } \cdot \min \big({v_{\rm max}^{\rm{a}}, v_{\rm max}^{\rm{d}}}\big)
\end{equation*}
where ${\bm{\bar p}}_{{\rm s}} = {\bm{p}}_{{\rm s}}-{\bm{p}}_{{\rm p}}$ and $\mathcal{VO}_j$ encompasses all possible velocities that could result in a collision with the obstacle $o_j$ within a look-ahead time $\Delta t$. 
Notably, the term $ \min \big({v_{\rm max}^{\rm{a}}, v_{\rm max}^{\rm{d}}}\big)$ ensures that the collective motion remains feasible even in scenarios where {$v_{\rm max}^{\rm{a}}< v_{\rm max}^{\rm{d}}$}. 

After the parameters and their update policy are determined, the next step is to control the defenders to realize the desired formation. 
The robot-level control law, which accounts for formation maintenance, connectivity preservation, and collision avoidance, is presented in Appendix~\ref{Control law}. 
In addition, formation parameter consensus is a fundamental requirement in distributed multi-robot coordination. Recent studies have shown that the agreement behavior of systems can be significantly affected by the design of coordination mechanisms~\cite{Ma2023Automatica}.
To enable distributed implementation, we introduce the following negotiation protocol for defender $i$:
\begin{equation}
    \dot{\hat{\bm{\theta}}}_i
    = \pi_i(\bm{s}_i)
    + c_{\theta}\sum_{j\in\mathcal{N}_i}
    \big(\hat{\bm{\theta}}_j-\hat{\bm{\theta}}_i\big).
    \label{Equ_negotiation}
\end{equation}

Let $\bm{\theta}^{*}$ denote the ground-truth optimal parameter {vector.} 
Assuming that the policy is well trained, we have
\begin{equation*}
    \big(\hat{\bm{\theta}}_i - \bm{\theta}^{*}\big)^\top \pi_i(\bm{s}_i) \leq 0,
    \label{Equ_well_trained}
\end{equation*}
with equality only when $\hat{\bm{\theta}}_i = \bm{\theta}^{*}$. 
The convergence of the formation parameter estimates is established in Theorem~\ref{Theo4_formation_parameter}. 
\begin{theorem}
\label{Theo4_formation_parameter}
    Under the negotiation protocol in Eq.~\eqref{Equ_negotiation}, all defenders' estimates satisfy 
    \[
        \lim_{t\rightarrow\infty} 
        \hat{\bm{\theta}}_i = \bm{\theta}^{*},\quad \forall i\in\mathcal{V}_{\rm d}.
    \]
\end{theorem}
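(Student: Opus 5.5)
We prove Theorem~\ref{Theo4_formation_parameter} with a Lyapunov argument built on the ``well-trained'' condition. We treat $\bm{\theta}^{*}$ as constant over the interval of interest. We also assume that the communication graph $\mathcal{G}$ among the defenders is undirected and connected; the control law in Appendix~\ref{Control law} is meant to guarantee this through its connectivity-preservation term.

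Define the errors $\bm{e}_i=\hat{\bm{\theta}}_i-\bm{\theta}^{*}$ and the candidate function
\begin{equation*}
V=\frac{1}{2}\sum_{i\in\mathcal{V}_{\rm d}}\Vert \bm{e}_i\Vert^2 .
\end{equation*}
Since $\dot{\bm{\theta}}^{*}=0$, differentiating $V$ along Eq.~\eqref{Equ_negotiation} gives
\begin{equation*}
\dot V=\sum_{i}\bm{e}_i^\top\pi_i(\bm{s}_i)+c_\theta\sum_i\sum_{j\in\mathcal{N}_i}\bm{e}_i^\top(\bm{e}_j-\bm{e}_i).
\end{equation*}
The consensus term is handled with the standard symmetrization over undirected edges. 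Stacking $\bm{e}=[\bm{e}_1^\top,\dots,\bm{e}_{N_{\rm d}}^\top]^\top$, it equals
\begin{equation*}
-\frac{c_\theta}{2}\sum_{(i,j)\in\mathcal{E}}\Vert\bm{e}_i-\bm{e}_j\Vert^2=-c_\theta\,\bm{e}^\top(L(\mathcal{G})\otimes \bm{I}_5)\bm{e}\le 0 .
\end{equation*}
The first term is nonpositive term by term by the well-trained assumption. Hence $\dot V\le 0$, so $V$ is nonincreasing and bounded below, and the estimates $\hat{\bm{\theta}}_i$ remain bounded.

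Next we apply LaSalle's invariance principle. For a rigorous nonautonomous version we would use Barbalat's lemma, requiring $\pi_i$ to be continuous and bounded so that $\dot V$ is uniformly continuous. The trajectories converge to the largest invariant set contained in $\{\dot V=0\}$. On this set both nonpositive terms vanish:
\begin{itemize}
\item[1)] vanishing of the first term forces $\bm{e}_i^\top\pi_i=0$ for each $i$, which by the equality clause of the well-trained condition means $\hat{\bm{\theta}}_i=\bm{\theta}^{*}$;
\item[2)] vanishing of the consensus term forces $\bm{e}_i=\bm{e}_j$ on every edge, so that, by connectivity ($\lambda_2(\mathcal{G})>0$), all estimates agree.
\end{itemize}
Either fact alone already yields $\bm{e}=0$. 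Therefore $\hat{\bm{\theta}}_i\to\bm{\theta}^{*}$ for all $i\in\mathcal{V}_{\rm d}$.

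The main obstacle is the justification of LaSalle/Barbalat. The policy output $\pi_i(\bm{s}_i)$ depends on time-varying states, so the closed loop is not autonomous in $\bm{e}$ alone. We would handle this in two steps:
\begin{itemize}
\item[1)] Write $\dot V\le-W(\bm{e})$ with $W(\bm{e})=-\sum_i\bm{e}_i^\top\pi_i\ge0$, using the well-trained condition pointwise. Integrate to get $\int_0^\infty W\,dt<\infty$.
\item[2)] Invoke Barbalat's lemma under boundedness of $\pi_i$ and its derivative to conclude $W\to0$. Then use the strictness of the well-trained inequality to deduce $\bm{e}\to0$. If $W$ lacks a uniform positive-definite lower bound, we would first assume a strengthened form $\bm{e}_i^\top\pi_i\le-\alpha(\Vert\bm{e}_i\Vert)$ with $\alpha$ of class $\mathcal{K}$.
\end{itemize}

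A secondary issue is time variation of $\mathcal{G}$. If the graph switches, we would use the common Lyapunov function $V$ above, which is valid for every connected undirected topology.
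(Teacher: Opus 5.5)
Your proposal is correct and follows essentially the same route as the paper: the same Lyapunov function $V=\frac{1}{2}\sum_i\Vert\hat{\bm{\theta}}_i-\bm{\theta}^*\Vert^2$, the same edge-symmetrization of the consensus term over the undirected graph, and the same use of the equality clause of the well-trained condition. Your Barbalat/class-$\mathcal{K}$ discussion usefully makes explicit a non-autonomy issue that the paper's proof passes over, since it concludes convergence directly from $\dot V\le 0$ with equality only at $\bm{\theta}^*$; two small corrections are that connectivity is not actually needed (the first term alone forces $\bm{e}=\bm{0}$, which is all the paper uses), and, contrary to your aside, vanishing of the consensus term alone only gives agreement $\bm{e}_1=\dots=\bm{e}_{N_{\rm d}}$, not $\bm{e}=\bm{0}$.
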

\begin{proof}
    See Appendix~\ref{proof1}.
\end{proof}

\begin{figure*}[t]
    \centering
    \includegraphics[width=1.0\textwidth]{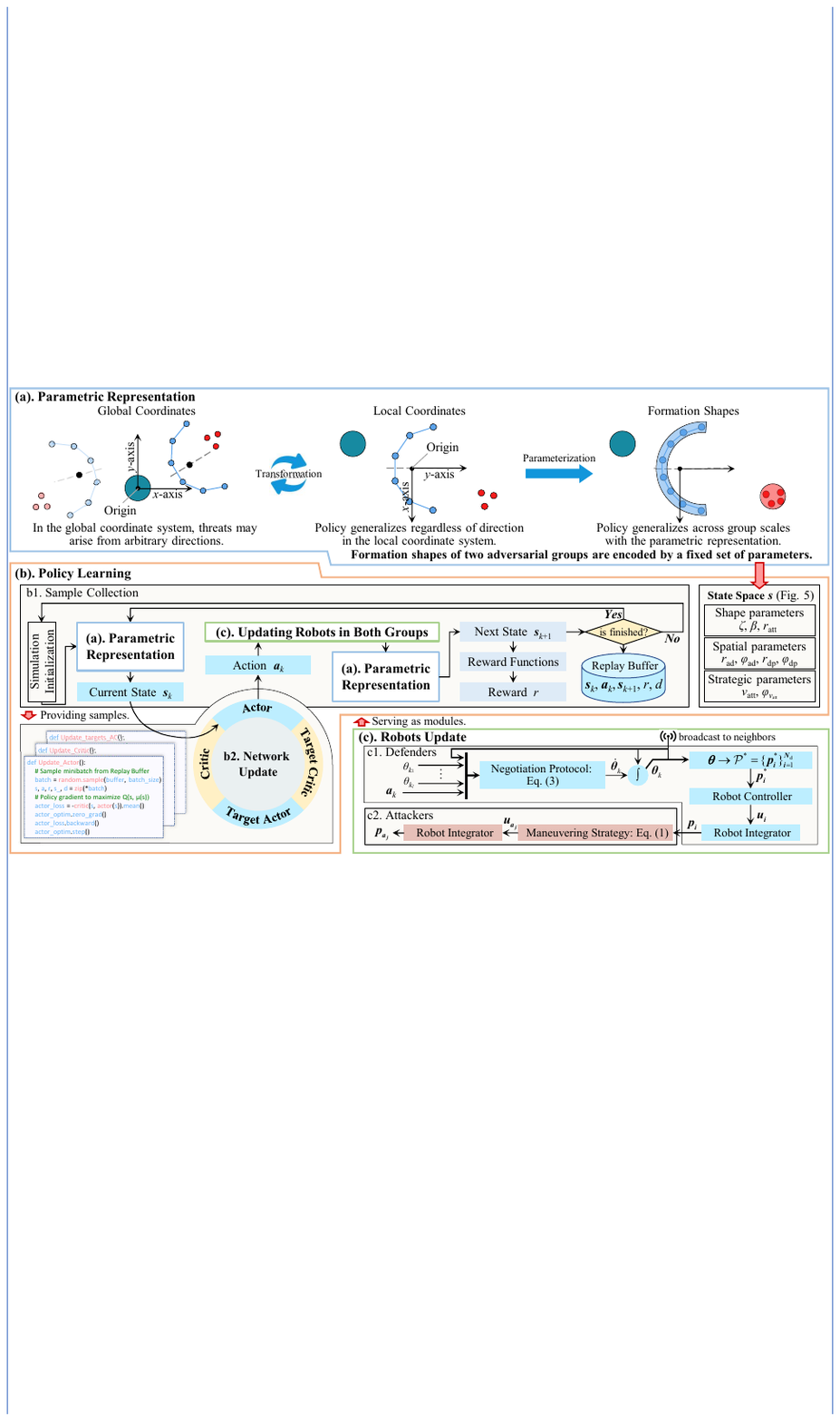}
    \caption{RL-based framework for formation parameter update. 
    (a) Parametric representation of the defensive formation and the attacking swarm. The resulting policy can generalize across different attacking directions and group scales. 
    (b) Learning process of the policy for updating the formation parameters. 
    (c) Robot motions are computed based on their desired positions.}
     \label{Method}
\end{figure*}

\subsection{RL-Based Formation Parameter Decision}
This subsection presents how to derive an RL-based policy that regulates $\bm{\theta}$ during the defending phase. 

\subsubsection{Overall Framework}
The overall framework is illustrated in Fig.~\ref{Method}. 
First, the two adversarial groups are represented as formation shapes with specific geometric characteristics, allowing scenarios with different group scales to be described in a fixed-dimensional state space. 
Second, offline simulations covering diverse attacking strategies are constructed to serve as the policy learning environment.
To implicitly embed the attacking strategy into the policy learning process, the observable velocity information of the attacking swarm is incorporated into the state space. 

Based on the above modeling, the policy is iteratively refined within the RL framework. 
At each time step, the current situation is transformed into the state vector $\bm{s}_{k}$, which is then fed into the actor network to generate an action $\bm{a}_{k}$ for updating the formation. 
Subsequently, the attackers update themselves according to Eq.~\eqref{Equ_attacker_strategy}, and the resulting situation is transformed into the next state $\bm{s}_{k+1}$. 
The reward $r$ is then calculated from $\bm{s}_{k+1}$ based on several reward terms defined below. 
As a result, the tuple $(\bm{s}_{k}, \bm{a}_{k}, \bm{s}_{k+1}, r, d)$ is stored in the replay buffer for subsequent training, where $d$ encodes the episode outcome: $d=1$ for successful herding, $d=2$ for failure, and $d=0$ if the episode continues or times out. 
Finally, the networks, including the actor, critic, and target networks, are updated using the Deep Deterministic Policy Gradient (DDPG) algorithm. 

\subsubsection{State and Action Spaces}
The actor network learns a policy for updating $\bm{\theta}$, and the action space is defined as $\bm{a}=\dot{\bm{\theta}}$. 

The state space should represent varying group sizes, intrusion directions, and attacking strategies. 
First, to enable the policy to generalize across different group sizes, the first component of the state space consists of the shape parameters $\zeta$, $\beta$, and $r_{\rm att}$, where $r_{\rm att}$ denotes the distance from the center of the attacking swarm to its farthest attacker. 
Second, to handle intrusions from arbitrary directions, the spatial relationships among the defensive formation, the attacking swarm, and the protected area are transformed into a coordinate frame in which the formation orientation is aligned with a reference angle of $0^\circ$, as shown in Fig.~\ref{Method}(a). 
The corresponding spatial parameters are $r_{\rm dp}$, $\varphi_{\rm dp}$, $r_{\rm ad}$, and $\varphi_{\rm ad}$, where $r$ and $\varphi$ denote the relative distance and angle between the corresponding entities. 
The subscripts ${\rm a}$, ${\rm d}$, and ${\rm p}$ refer to the attacking swarm, the defensive formation, and the protected area, respectively.
Third, to enable the learned policy to respond to different attacking strategies, the observed velocities of the attackers are incorporated into the state space. 
Accordingly, the third component of the state space is the average velocity of the attacking swarm, denoted by $\bm{v}_{\rm att}$. 

Because the spatial angles may wrap across the $-\pi$ to $\pi$ boundary, directly using them would introduce discontinuities that hinder gradient-based optimization. 
Therefore, each angle is encoded using its sine and cosine. 
As a result, the state space is defined as 
\begin{equation*}
\bm{s} = 
\begin{bmatrix}
\zeta, & \beta, & r_{\rm att}, & r_{\rm dp},  \\
\cos(\varphi_{\rm dp}), & \sin(\varphi_{\rm dp}), & r_{\rm ad}, & \cos(\varphi_{\rm ad}),\\
\sin(\varphi_{\rm ad}), &\Vert\bm{v}_{\rm att}\Vert, & \cos(\varphi_{\bm{v}_{\rm att}}), & \sin(\varphi_{\bm{v}_{\rm att}})
\end{bmatrix}.
\end{equation*}
The state space is graphically illustrated in Fig.~\ref{Fig_alignment_angle}. 
\begin{figure}[h]
    \centerline{\includegraphics[width=18pc]{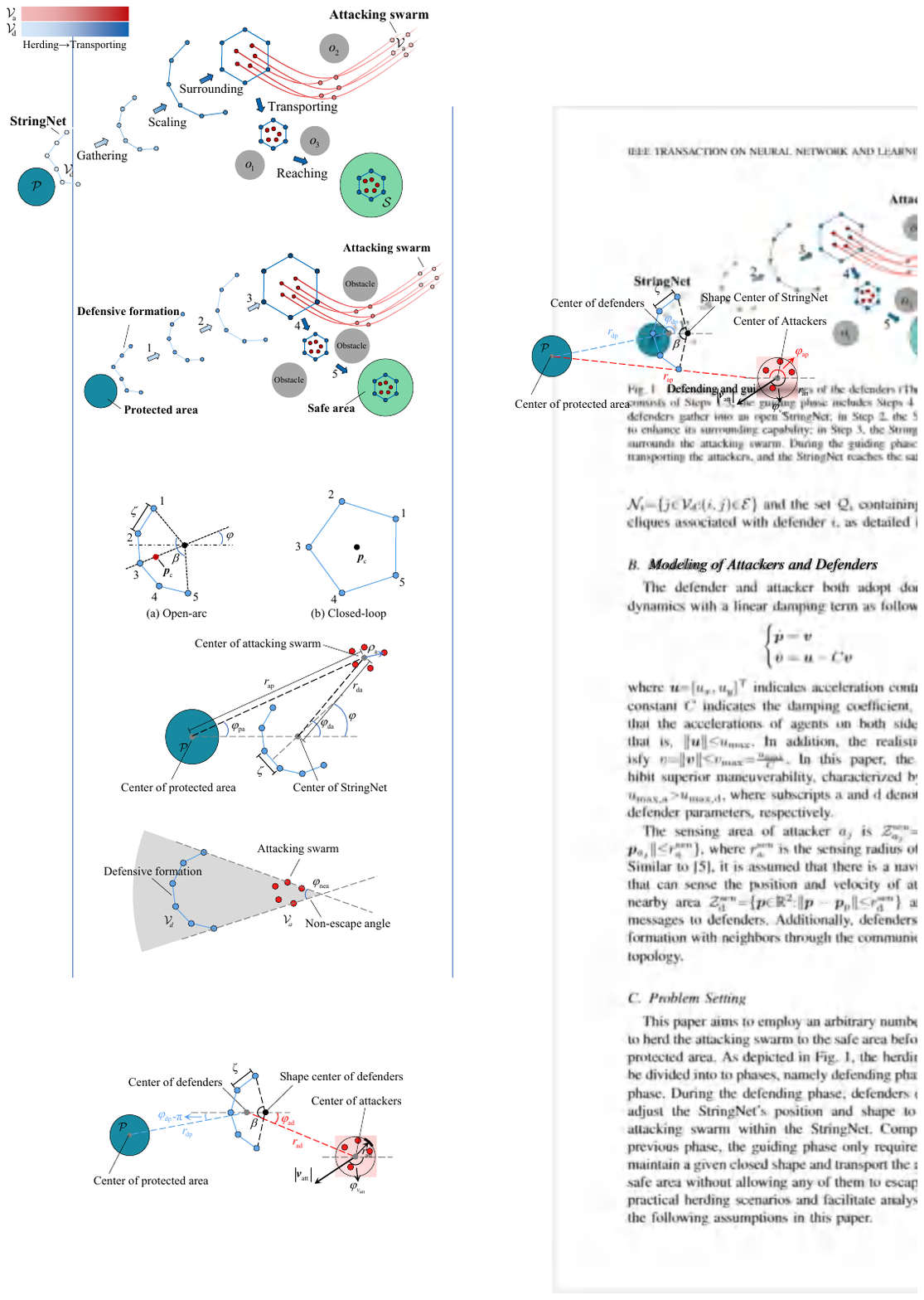}}
    \caption{Schematic diagram of the state space.}
    \label{Fig_alignment_angle}
\end{figure}

\subsubsection{Reward Function}
The reward for each state transition $\bm{s}_k\xrightarrow{\bm{a}_k} \bm{s}_{k+1}$ is defined as $r(\bm{s}_k,\bm{a}_k) = r_{\rm ter} + f(\bm{s}_{k+1})$, 
where $r_{\rm ter}$ and $f(\bm{s}_{k+1})$ are the terminal and immediate rewards, respectively. 
Reward design plays a critical role in policy learning~\cite{zhao2024mathematical}, and the two types of rewards are detailed as follows. 

The primary criterion for evaluating the current policy is its effectiveness in enabling the defensive formation to surround the attacking swarm. 
The terminal reward is defined as $r_{\rm ter} = 2000$ for successful herding, $r_{\rm ter} = -2000$ for failure, and $r_{\rm ter} = 0$ if the episode continues or times out. 
In addition, the following four immediate reward terms are introduced to alleviate reward sparsity. 
It should be noted that these immediate rewards are set to non-positive values. 
This setup penalizes unnecessary time steps, encouraging the formation to complete the surrounding task as quickly as possible and thereby maximize the state-value function. 

First, the defensive formation should surround the attacking swarm as effectively as possible. 
To quantify the surrounding advantage, the concept of the ``non-escape angle'' $\varphi_{\rm nea}  \in (0, 2\pi]$ is introduced.
This angle defines an area that is impassable for all the attackers, which is the gray sector present in Fig.~\ref{Fig_non_escape_angle}. 
Accordingly, the reward $f^{\rm sur}$ is defined as: 
\begin{equation*}
    f^{\rm sur} = \frac{\varphi_{\rm nea}}{2\pi} - 1.0.
\end{equation*}
In addition, when $\varphi_{\rm nea}$ approaches zero, the attacking swarm has nearly bypassed the defensive formation. 
In this case, subsequent surrounding becomes extremely difficult because the attackers possess superior maneuverability. 
To prevent such situations, an additional penalty term, $f_{\rm min}^{\rm sur} = -1.0$, is added to $f^{\rm sur}$ when $\varphi_{\rm nea}<\hat \varphi_{\rm nea}$. 
Here, the lower-bound safety threshold $\hat{\varphi}_{\rm nea}$ is empirically set to $\frac{\pi}{20}$.

Second, the defensive formation needs to prevent the attacking swarm from approaching the protected area. 
However, once the attackers are herded far enough away from the protected area, further herding is unnecessary, and the formation needs to focus on increasing other advantages. 
Therefore, the reward $f^{\rm dis}$ is defined as: 
\begin{equation*}
    f^{\rm dis} = \min\big({r_{\rm ap} - \hat r_{\rm ap}}, 0.0\big) 
\end{equation*}
where $\hat r_{\rm ap}$ is a given distance determined by the size of the herding scenario. 
Here, the value of $\hat r_{\rm ap}$ is set to $\frac{r_{\rm env}}{2}$.
\begin{figure}[b]
    \centerline{\includegraphics[width=21.pc]{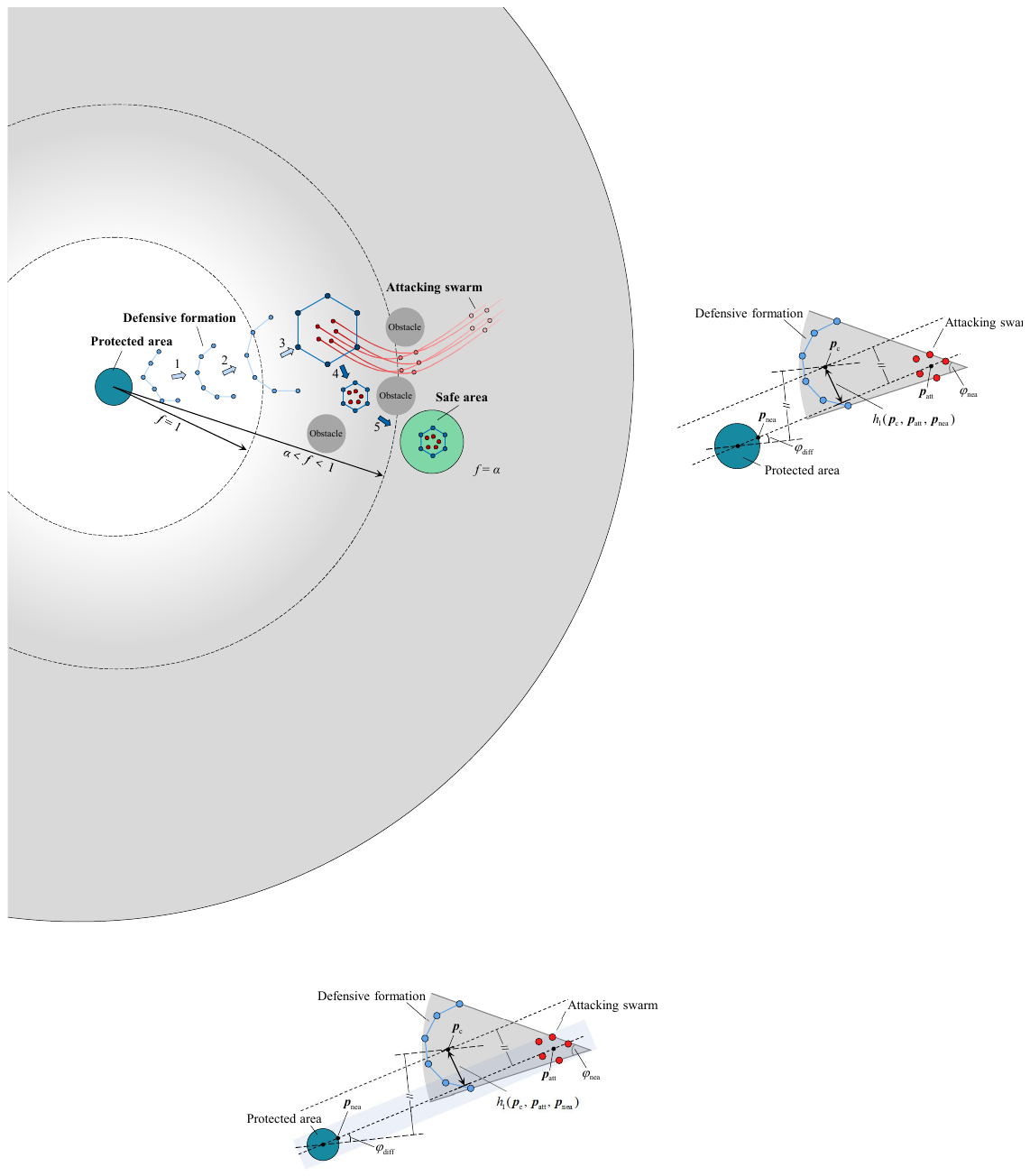}}
    \caption{Schematic diagram of reward functions. The symbol ---//--- indicates that the two lines are parallel.
    In this diagram, the defensive formation needs to move to the lower right while rotating counterclockwise.}
    \label{Fig_non_escape_angle}
\end{figure}

Third, the defensive formation is expected to function as an effective barrier interposed between the protected area and the attacking swarm. 
Specifically, the formation should stay in front of the protected area rather than behind it. 
Considering this spatial constraint, we introduce the concept of the ``nearest defending position'' $\bm{p}_{\rm nea}$, defined as
$\bm{p}_{\rm nea}=\bm{p}_{\rm att}\frac{\rho_{\rm p}}{\Vert\bm{p}_{\rm att}\Vert}$. 
Here, $\bm{p}_{\rm{att}}$ denotes the center of the attacking swarm relative to the protected area, and $\rho_{\rm p}$ is the radius of the protected area. 
Accordingly, the reward $f^{\rm pos}$ is defined as:
\begin{equation*}
f^{\rm pos} = {\rm min}( \rho_{\rm p} - h_1(\bm{p}_{\rm{c}}, \bm{p}_{\rm{att}}, \bm{p}_{\rm{nea}}), 0.0)
\end{equation*}
where $h_1(\bm{p}, \bm{p}_1, \bm{p}_2)$ is used to compute the shortest distance from point $\bm{p}$ to the line segment connecting $\bm{p}_1$ and $\bm{p}_2$. 
This reward imposes a penalty when the formation moves outside the corridor defined by $h_1(\bm{p}_{\rm c},\bm{p}_{\rm att},\bm{p}_{\rm nea})\leq\rho_{\rm p}$. 
As shown in Fig.~\ref{Fig_non_escape_angle}, the formation is required to move into the gray-blue corridor in the lower right and form a defensive barrier. 

Finally, the opening of the defensive formation should always be directed toward the attacking swarm.
This ensures that the formation can quickly respond to the swarm's maneuvers. 
For example, in Fig.~\ref{Fig_non_escape_angle}, the formation needs to rotate counterclockwise to directly face the attacking swarm. 
Consequently, the reward $f^{\rm ali}$ is defined as: 
\begin{equation*}
f^{\rm ali} = {\rm min}(\varphi_{\rm diff}-h_2(\varphi, \varphi_{\rm {ad}}), 0.0)
\end{equation*}
where $h_2(\cdot, \cdot )$ is used to evaluate the difference between two angles, properly handling the $2\pi$-periodicity (e.g., treating $-\pi$ and $\pi$ as identical).
Here, the value of $\varphi_{\rm diff}$ is set to $\frac{\pi}{60}$, which is designed to prevent the formation from adjusting its orientation too frequently for angle alignment.

The four components operate together to drive the defensive formation to complete the surrounding task. 
During the initial herding phase, $f^{\rm pos}$ and $f^{\rm ali}$ regulate the formation position $\bm{p}_{\rm c}$ and orientation $\varphi$ to establish a barrier directly confronting the attacking swarm. 
Building on this posture, $f^{\rm sur}$ adjusts the opening size of the formation to extend the barrier length, while $f^{\rm dis}$ encourages the formation to drive the attackers away from the protected area as they approach. 
However, when the formation is close to surrounding the attackers, further emphasizing $f^{\rm pos}$, $f^{\rm ali}$, and $f^{\rm dis}$ may cause the defenders to waste their already limited maneuverability on repeated posture adjustments, while neglecting the surrounding action.
At this stage, the priority should be increasing the surrounding advantage. 
To resolve this conflict and eliminate the reliance on sensitive hand-tuned weights, we explicitly introduce a state-dependent switching mechanism. 
Once the formation can complete the encirclement by slightly closing its opening, the components $f^{\rm pos}$, $f^{\rm ali}$, and $f^{\rm dis}$ are deactivated. 
As a result, the combined immediate reward $f$ is defined as 
\begin{equation}
\small
\label{reward_function}
f(\bm{s}) = \begin{cases}
    k_{\rm sur}\cdot f^{\rm sur}+k_{\rm dis}\cdot f^{\rm dis}+k_{\rm pos}\cdot f^{\rm pos}\\+k_{\rm ali}\cdot f^{\rm ali},\ &\varphi_{\rm nea} \leq \pi \\
    k_{\rm sur}\cdot f^{\rm sur},\ &\varphi_{\rm nea} > \pi  \\
\end{cases}
\end{equation}
where $k_{\rm sur}$, $k_{\rm dis}$, $k_{\rm pos}$, and $k_{\rm ali}$ represent the weighting coefficients for their respective reward components. 
In Section~\ref{subsec:learning}, we demonstrate that the state-dependent reward design is more effective than its static-weight counterpart in guiding policy learning for the herding task.

\subsubsection{Policy Learning}
Policy learning is conducted in simulations with a fixed number of simulated robots. 
In each step, a sample $(\bm{s}_k, \bm{a}_k, \bm{s}_{k+1}, r, d)$ is stored in the replay buffer $\mathcal{B}$, where $\bm{s}_k$ and $\bm{s}_{k+1}$ denote the current and next states, $\bm{a}_k$ is the action taken, $r$ is the reward computed based on $\bm{s}_{k+1}$, and $d$ indicates the termination state. 
We adopt the DDPG algorithm to train the policy in a continuous action space by leveraging the transition samples stored in the replay buffer $\mathcal{B}$. 
Since the low-dimensional state space eliminates the need to process high-dimensional variables, we employ lightweight fully connected neural networks for policy learning. 
Detailed network architectures are presented in Appendix~\ref{DDPG_framework}. 

\subsubsection{Online Deployment}
We analyze the computational complexity of our method to demonstrate its feasibility for real-time deployment, as detailed in Appendix~\ref{Complexity}. 
Online deployment also requires the learned policy to generalize beyond the fixed group size used during policy learning. 
Its direct application to scenarios with substantially different group sizes is therefore inherently limited. 
Moreover, when the actual effective barrier threshold $\hat{r}_{\rm d}^{\rm str}$ differs from the nominal value $r_{\rm d}^{\rm str}$ assumed during policy learning, Assumption~\ref{Ass_1} may no longer hold. 
To address these two deployment limitations, we introduce a scale-adaptive policy transfer mechanism and a barrier-threshold correction mechanism in Appendices~\ref{policy_transfer} and~\ref{correction}, respectively. 

\section{Simulation Results}\label{sec:simulations}
In this section, we first present the policy learning process. 
Second, we conduct ablation studies to demonstrate the necessity of the velocity observation and the negotiation protocol.
Third, the performance of our method is benchmarked against three baselines. 
Finally, we evaluate the scalability and robustness of our method across varying group sizes. 
\subsection{Simulation Settings} \label{subsec:settings}
\begin{figure*}[t]
    \centering
    \includegraphics[width=0.94\textwidth]{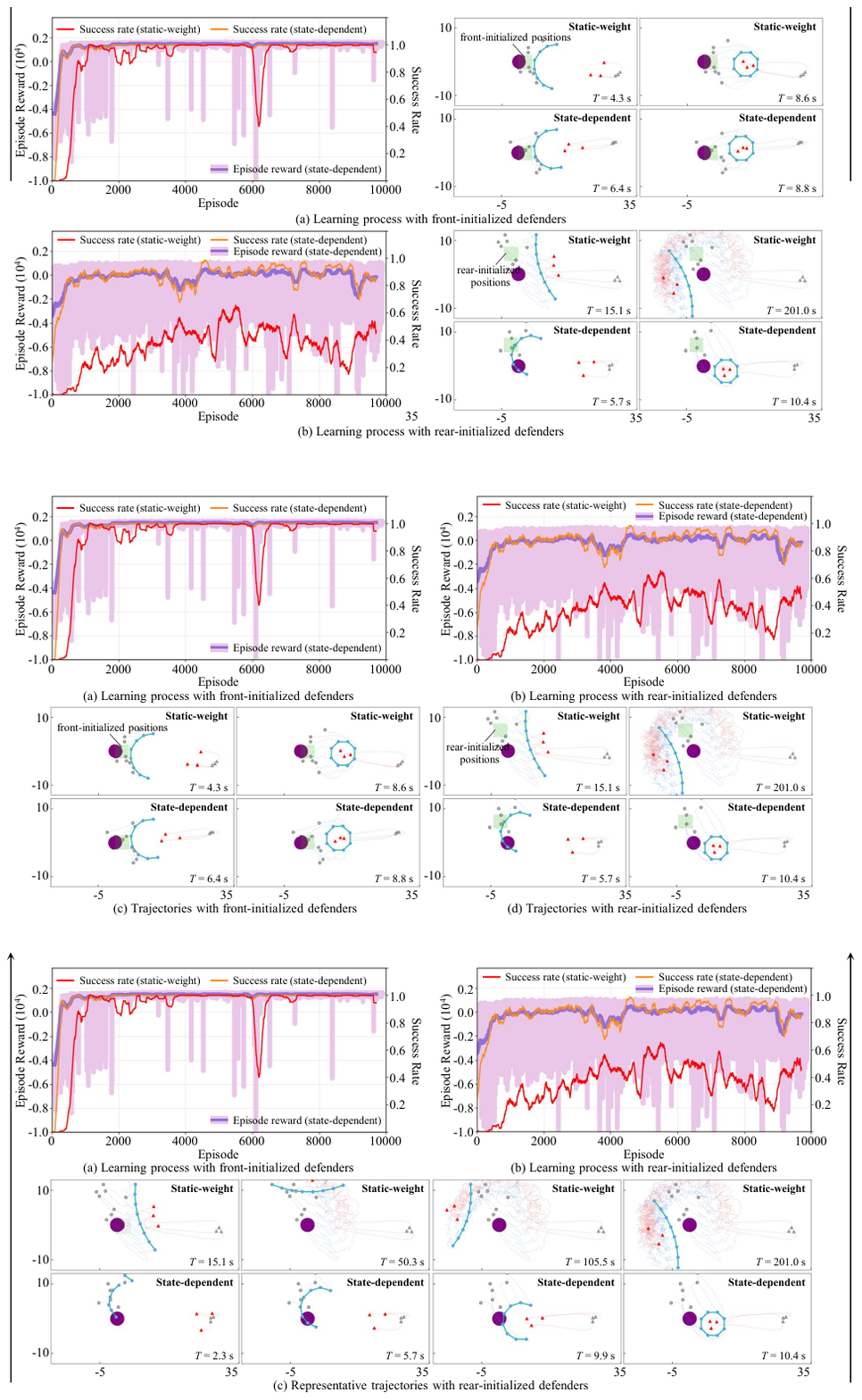}
    \caption{Learning process. 
    For each subfigure, the left panel shows the cumulative reward and success rate during policy learning, while the right panel presents representative simulation snapshots corresponding to the state-dependent and static-weight reward functions. }
    \label{Learning_pro}
\end{figure*}
All simulations are conducted on a workstation with an Intel i7-13700KF CPU and an NVIDIA RTX 4080 GPU, using PyTorch with Python 3.11. 
The training batch size is set to 64, and the main parameters are summarized in Table~\ref{parameter1}. 
It can be observed that the attackers exhibit 25\% higher maneuverability than the defenders. 
The PyTorch implementation is available in the shared repository.\hyperref[fn:pdta]{\textsuperscript{\ref{fn:pdta}}}
To further validate our method, we compare it with three baselines detailed as follows. 

1) Voronoi-based method~\cite{deng2020multi}.
This method assigns defensive roles to a multi-robot team to protect the area from a single attacker. 

2) Satellite-based method~\cite{chen2024multipleherding}.
This method plans paths for defenders and assigns each robot a rotation center, generating satellite-like circular or spiral motions around the attacker or neighbors to compress the attacker's reachable area. 

3) Model predictive control (MPC)-based method~\cite{Chen2024RAL}.
This method addresses multi-pursuer multi-evader scenarios using MPC. 
Based on predicted state sequences, defenders are assigned to specific attackers and driven to form regular polygonal formations for coordinated encirclement.
\begin{table}[!t] 
\centering 
\scriptsize
\setlength{\tabcolsep}{3pt}
\caption{Parameter Settings.}
\label{parameter1} 
\begin{tabular}{c c c c c c} 
\toprule 
Parameter & Value & Parameter & Value & Parameter & Value \\ 
\midrule 

$\rho_{\rm s}$ & $2.0~\mathrm{m}$ 
& $\rho_{\rm p}$ & $2.0~\mathrm{m}$ 
& $\sigma_{p}$ & $2.0~\mathrm{m}$ \\

$\sigma_{v}$ & $0.4~\mathrm{m/s}$ 
& $v^{\rm d}_{\max}$ & $2.0~\mathrm{m/s}$ 
& $v^{\rm a}_{\max}$ & $2.5~\mathrm{m/s}$ \\

$u^{\rm d}_{\max}$ & $15.0~\mathrm{m/s^2}$ 
& $u^{\rm a}_{\max}$ & $20.0~\mathrm{m/s^2}$ 
& $r_{\rm d}^{\rm str}$ & $3.0~\mathrm{m}$ \\

$r_{\rm d}^{\rm com}$ & $5.0~\mathrm{m}$ 
& $r_{\rm d1}^{\rm safe}$ & $1.0~\mathrm{m}$ 
& $r_{\rm d2}^{\rm safe}$ & $0.8~\mathrm{m}$ \\

$k_{\rm ap}$ & $5$--$8$ 
& $k_{\rm ad}$ & $8$--$12$ 
& $k_{\rm aa}$ & $2$--$4$ \\

$\Delta t$ & $0.1~\mathrm{s}$ 
& $r_{\rm aa}^{\rm safe}$ & $1.0$--$2.0~\mathrm{m}$ 
& $r_{\rm aa}^{\rm avo}$ & $2.0$--$5.0~\mathrm{m}$ \\

$r_{\rm ad}^{\rm safe}$ & $0.9$--$1.8~\mathrm{m}$ 
& $r_{\rm ad}^{\rm avo}$ & $7.0$--$14.0~\mathrm{m}$ 
& $\alpha^{\pi}$ & $0.001$ \\

$\alpha^{q}$ & $0.001$ 
& $\tau$ & $0.005$ 
& $\gamma$ & $0.99$ \\

$k_{\rm sur}$ & $2.0$ 
& $k_{\rm dis}$ & $1.0$ 
& $k_{\rm pos}$ & $1.0$ \\

$k_{\rm ali}$ & $1.0$ 
& $r_{\rm att}^{\rm ref}$ & $3.5~\mathrm{m}$ 
& \multicolumn{2}{c}{} \\

\bottomrule 
\end{tabular} 
\end{table}

\subsection{Learning Process}\label{subsec:learning}
Policy learning is conducted in simulations involving 8 defenders and 3 attackers, with each episode terminated when all attackers are surrounded or when at least one attacker breaches the protected area. 
During the first 2000 episodes, the attacking strategy is deterministic. 
This setup allows the policy to first learn an effective herding behavior under a stable adversarial condition. 
Subsequently, diverse attacking strategies are gradually introduced to enhance policy generalization. 

To validate the superiority of the state-dependent reward function in Eq.~\eqref{reward_function}, we compare it with a static-weight counterpart using the same weights in two types of policy-learning simulations. 
These simulations differ in the random initialization range of the defenders' center, which is marked by light-green regions in the right panels of Fig.~\ref{Learning_pro}. 
In the first setting, referred to as front initialization, the defenders are initialized between the attacking swarm and the protected area, allowing them to establish a defensive barrier without substantial posture adjustment. 
As shown in Fig.~\ref{Learning_pro}(a), the formation gradually reshapes from a nearly linear configuration into a closed circular configuration as the attackers approach. 
Under this relatively favorable initialization, both reward functions effectively guide policy learning, and their success rates gradually converge to $100\%$ as the actor and critic networks are refined. 
These results validate the effectiveness of the four reward components shared by the two designs.

However, the performance consistency between the two reward functions diminishes when the defenders are initialized behind the protected area.
This setting is more challenging because the defensive formation must simultaneously adjust its position, orientation, and opening despite its limited maneuverability. 
As shown in Fig.~\ref{Learning_pro}(b), the policy learned with the static-weight reward function struggles to balance these objectives, resulting in a success rate below $60\%$. 
A typical failure mode is that the formation repeatedly intercepts the attackers without completing the encirclement. 
Specifically, the persistent effects of $f^{\rm ali}$ and $f^{\rm pos}$ drive the formation to remain aligned with the line connecting the attacking swarm and the protected area. 
Consequently, the formation rotates around the protected area instead of prioritizing the enclosure of the attackers.
In contrast to its counterpart, the state-dependent reward function remains effective under this more challenging initialization condition. 
{In} the early stage, the defensive formation approaches and aligns with the attacking swarm to provide a favorable spatial configuration for the subsequent surrounding task. 
As the attackers become partially constrained, the policy places a higher priority on the surrounding task. 
By adaptively activating different reward components according to the current herding situation, the state-dependent reward function enables the learned policy to achieve more efficient herding performance.

\begin{figure*}[t]
    \centering
    \includegraphics[width=0.95\textwidth]{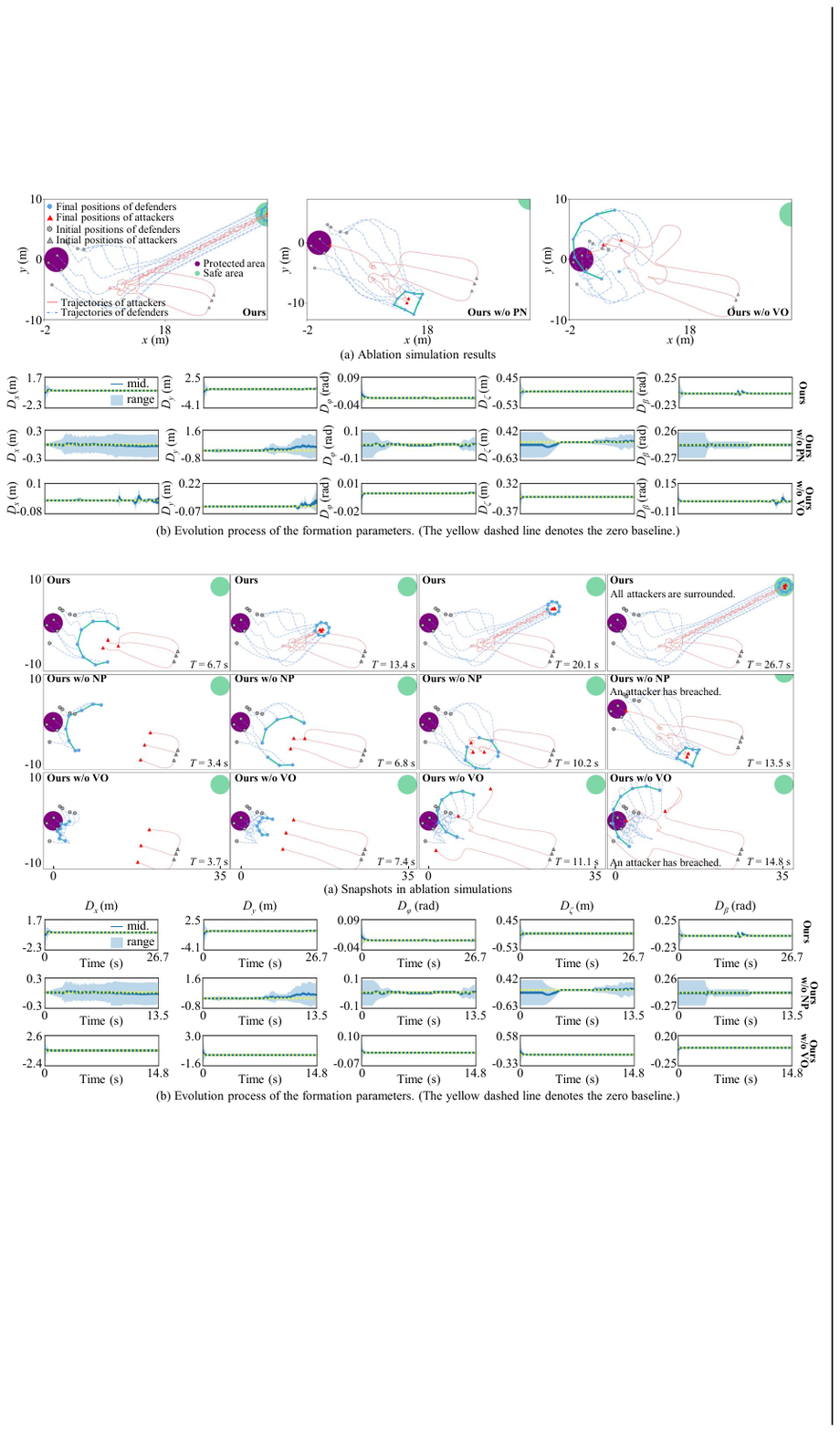}
    \caption{Ablation study results. 
    (a) Simulation results (top: ours, middle: ours \emph{w/o} NP, bottom: ours \emph{w/o} VO). 
    (b) Evolution of the formation parameters (top: ours, middle: ours \emph{w/o} NP, bottom: ours \emph{w/o} VO).}
    \label{ablation}
\end{figure*}

\subsection{Ablation Studies}\label{subsec:ablation}
This subsection evaluates the effectiveness of the negotiation protocol and velocity observations, and further examines the robustness of our method to communication failures and barrier-threshold violations.

We consider two ablated variants. 
In the first variant, the negotiation protocol is disabled (\textit{Ours w/o NP}), and each defender independently updates its local parameter estimate. 
To ensure that the defenders can initially organize into a coherent formation, their estimates of $\bm{p}_{\rm c}$ are initialized consistently, while the remaining parameter estimates deviate from each other. 
To evaluate the consensus error, we define a deviation set 
$D_{\sigma} = \big\{ \sigma_i - \bar{\sigma} \bigm| i \in \mathcal{V}_{\mathrm{d}} \big\}$, where $\bar{\sigma} = \frac{1}{N_{\mathrm{d}}} \sum_{j \in \mathcal{V}_{\mathrm{d}}} \sigma_{j}$ represents the mean parameter estimate across the defenders, and $\sigma_i$ specifies the parameter value estimated by defender $i$. 
In the second variant, the velocity observations are omitted (\textit{Ours w/o VO}). 
Accordingly, $\Vert\bm{v}_{\rm att}\Vert$, $\cos(\varphi_{\bm{v}_{\rm att}})$, and $\sin(\varphi_{\bm{v}_{\rm att}})$ are fixed to {$v^{\rm a}_{\rm max}+v^{\rm d}_{\rm max}$}, $1.0$, and $0.0$, respectively. 

The ablation results are shown in Fig.~\ref{ablation}, where identical initial robot positions are used across the three simulations. 
As shown in Fig.~\ref{ablation}(a), the attackers are successfully surrounded only when both the negotiation protocol and velocity observations are enabled. 
Without negotiation, the local parameter estimates rapidly diverge, causing severe formation deformation and gaps through which attackers can escape. 
Without velocity observations, the formation remains coherent but responds prematurely to attacker maneuvers and fails to adjust its parameters appropriately. 
These results confirm the necessity of both components.
Communication robustness is further evaluated through 100 randomized trials under different communication radii and packet loss rates, as reported in Table~\ref{tab:communicate_analysis}. 
When $r_{\rm d}^{\rm com}=4.0\,\mathrm{m}$, the success rate decreases from $73\%$ to $43\%$ as the packet loss rate increases from $20\%$ to $80\%$. 
By contrast, when $r_{\rm d}^{\rm com}=6.0\,\mathrm{m}$, the success rate remains at $61\%$ even under an $80\%$ packet loss rate. 
These results indicate that the negotiation protocol can tolerate moderate packet losses when sufficient local connectivity is maintained, whereas sparse communication may cause inconsistent local estimates and distorted formation segments.
\begin{table}[!t]
\centering
\setlength{\tabcolsep}{12pt}
\caption{Success Rates (\%) Under Different Communication Radii and Packet Loss Rates.}
\label{tab:communicate_analysis}
\begin{tabular}{ccccc}
\toprule
\multirow{2}{*}{Communication radius} 
& \multicolumn{4}{c}{Packet loss rate $\lambda_{\rm com}$} \\
\cmidrule(lr){2-5}
& 0.20 & 0.40 & 0.60 & 0.80 \\
\midrule
$r_{\rm d}^{\rm com}=4.0$ $\rm{m}$& 73 & 59 & 45 & 43 \\
$r_{\rm d}^{\rm com}=5.0$ $\rm{m}$& 86 & 68 & 67 & 56 \\
$r_{\rm d}^{\rm com}=6.0$ $\rm{m}$& 99 & 90 & 82 & 61 \\
\bottomrule
\end{tabular}
\end{table}

We also conduct a sensitivity analysis of the barrier-threshold assumption and the correction mechanism introduced in Appendix~\ref{correction}. 
Specifically, the effective barrier threshold is defined as $\hat{r}_{\rm d}^{\rm str}=\lambda_{\rm str}r_{\rm d}^{\rm str}$, where $\lambda_{\rm str}\in[0.5,1.0]$. 
Here, $\lambda_{\rm str}=1.0$ corresponds to the nominal barrier assumption used during policy learning. 
For each value of $\lambda_{\rm str}$, 100 trials are performed to evaluate the herding success rate, and the statistical results are summarized in Table~\ref{tab:barrier_sensitivity}. 
The correction-only case degrades significantly when $\lambda_{\rm str}\leq0.7$, because the reduced inter-defender spacing limits the spatial coverage of the defensive formation. 
Although the correction mechanism can mitigate moderate threshold violations, it cannot fully compensate for severe reductions in the effective barrier threshold. 
To mitigate this, we introduce additional defenders to restore the group coverage required for forming a valid defensive barrier. 
With two additional defenders, the success rate remains no lower than $85\%$ even at $\lambda_{\rm str}=0.5$. 
With four or six additional defenders, success rates of at least $97\%$ are maintained across all tested thresholds. 
Therefore, the barrier threshold violations can be effectively {compensated for} by the correction mechanism with additional defenders.

\begin{table}[!t]
\centering
\setlength{\tabcolsep}{6pt}
\caption{Success Rates (\%) Under Different Barrier Thresholds.}
\label{tab:barrier_sensitivity}
\begin{tabular}{lcccccc}
\toprule
\multirow{2}{*}{Strategy} 
& \multicolumn{6}{c}{Threshold coefficient $\lambda_{\rm str}$} \\
\cmidrule(lr){2-7}
& 1.00 & 0.90 & 0.80 & 0.70 & 0.60 & 0.50 \\
\midrule
No correction
& \textbf{100} & 72 & 64 & 57 & 55 & 44 \\
Correction
& \textbf{100} & \textbf{99} &  \textbf{92} & 55 & 22 & 3 \\
Correction $+2$ defenders 
& \textbf{100} & \textbf{100} & \textbf{100} &  \textbf{97} &  \textbf{87} &  \textbf{85} \\
Correction $+4$ defenders 
& \textbf{100} & \textbf{100} & \textbf{100} & \textbf{100} & \textbf{100} &  \textbf{97} \\
Correction $+6$ defenders 
& \textbf{100} & \textbf{100} & \textbf{100} & \textbf{100} & \textbf{100} & \textbf{99} \\
\bottomrule
\end{tabular}
\end{table}

\begin{figure*}[!t]
    \centering
    \includegraphics[width=0.99\textwidth]{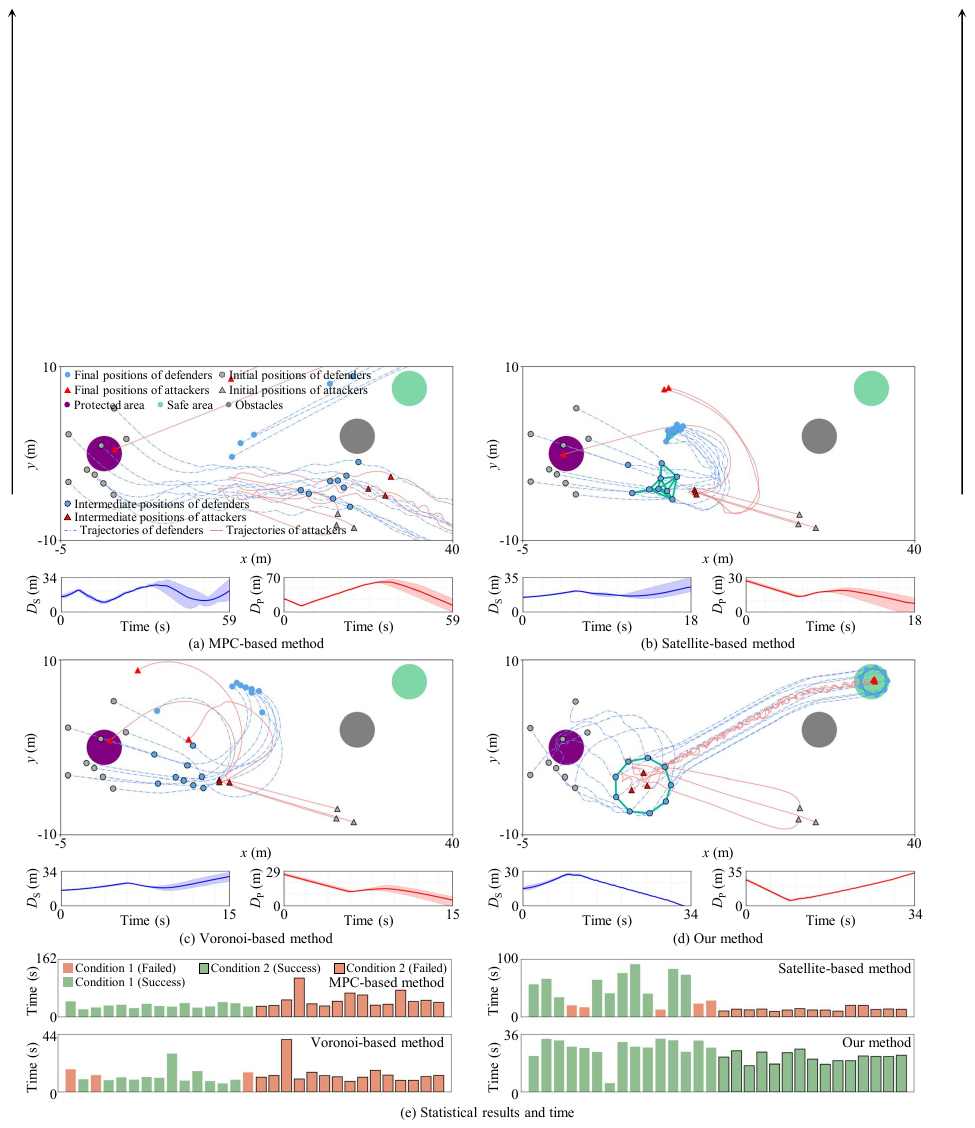}
    \caption{Comparative results against the baseline methods proposed in~\cite{deng2020multi, Chen2024RAL, chen2024multipleherding}.
    In (a)--(d), we depict results for the four evaluated methods, with each panel organized as follows: the top subfigure displays the trajectories, and the bottom subfigure presents the performance metrics. 
    In (e), we provide statistical results across 30 simulation trials evaluated under two distinct scenarios: Condition 1, where the attackers possess inferior maneuverability ({$u^{\rm a}_{\rm max} = 10.0$} {$\mathrm{m/s^2}$} and {$v^{\rm a}_{\rm max} = 1.5$} {$\mathrm{m/s}$}), and Condition 2, where the attackers possess superior maneuverability ({$u^{\rm a}_{\rm max} = 20.0$} {$\mathrm{m/s^2}$} and {$v^{\rm a}_{\rm max} = 2.5$} {$\mathrm{m/s}$}).}
    \label{comparation}
\end{figure*}

\subsection{Comparative Results}
In this subsection, we benchmark our method against the three baseline methods~\cite{deng2020multi, Chen2024RAL, chen2024multipleherding} over 30 independent trials. 
Simulation settings for this evaluation are presented as follows. 
First, Gaussian noise is incorporated into the observed information about the attackers only when evaluating our method, whereas the baseline methods are provided with ground-truth information. 
Second, given that the MPC-based method in~\cite{Chen2024RAL} requires the defenders to form a polygonal formation around each individual attacker, the numbers of defenders and attackers are set to 9 and 3, respectively. 
Finally, herding performance is evaluated by $D_{\rm S}$ and $D_{\rm P}$, denoting the attackers' distances to the safe and protected areas, respectively. 
Accordingly, $D_{\rm S} = 0$ denotes task success, while $D_{\rm P} = 0$ signifies task failure.
\begin{figure*}[t]
    \centering
    \includegraphics[width=0.96\textwidth]{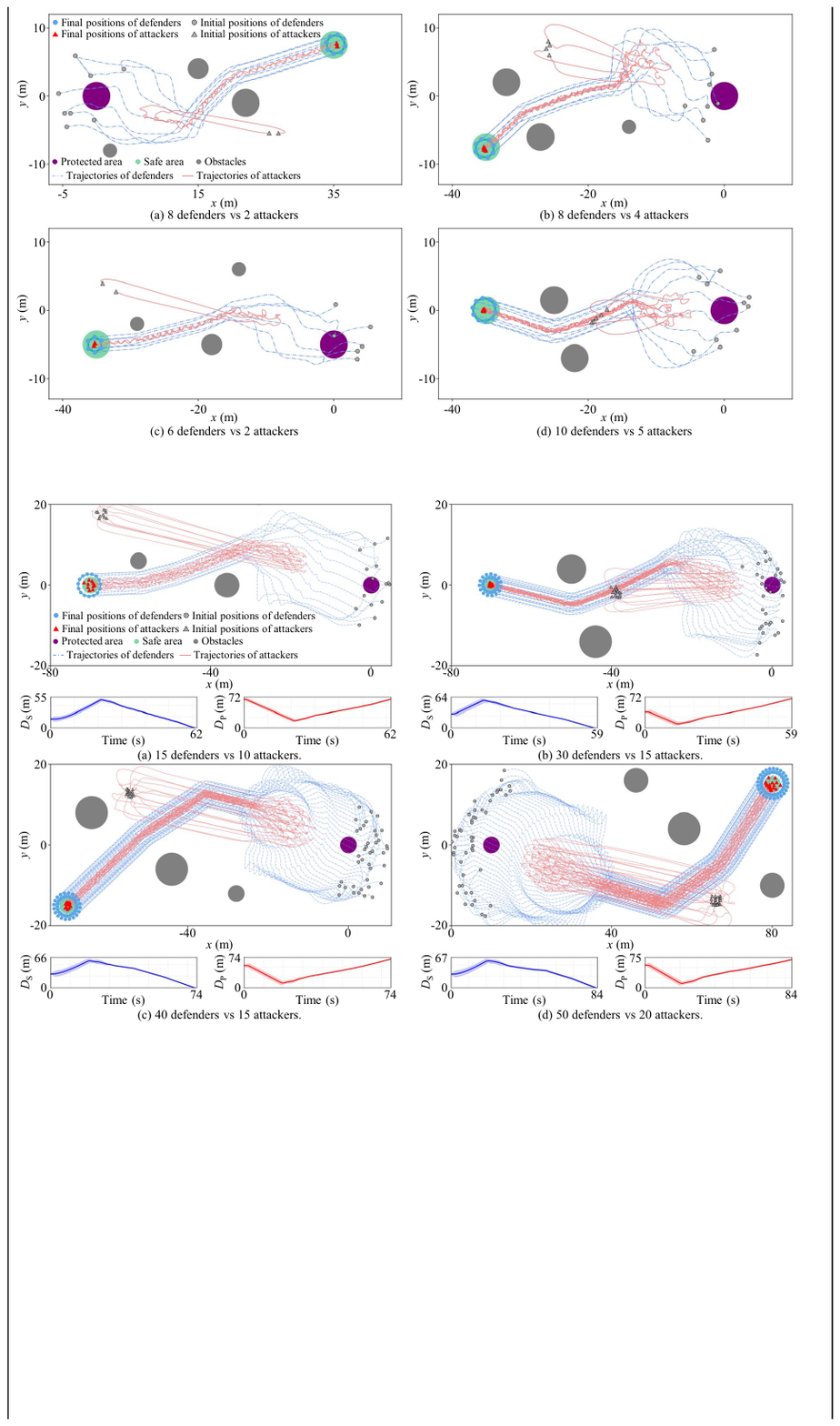}
    \caption{Generalization of the learned policy across different group sizes. 
    In each subfigure, the layout is as follows. Top: Simulation results. Bottom: Performance metrics.}
    \label{simulation3}
\end{figure*}

The three baselines require defenders to possess superior maneuverability. 
To verify their theoretical validity, we first evaluate these baselines across 15 trials under a relaxed scenario, where the attackers' maximum acceleration ({$u_{\mathrm{max}}^{\mathrm{a}}$}) and velocity ({$v_{\mathrm{max}}^{\mathrm{a}}$}) are restricted to 10.0 {$\mathrm{m/s^2}$} and 1.5 {$\mathrm{m/s}$}, respectively. 
As illustrated in Fig.~\ref{comparation}(e), all methods achieve high success rates under this condition. 
Compared with the other two baselines, the MPC-based baseline exhibits more robust performance and higher success rates. 
This comparative advantage demonstrates that explicitly maintaining organized formations can significantly enhance the collective capabilities of MRSs. 
Similar to our method, the MPC-based method drives defenders to form a polygonal formation around each attacker, resulting in multiple isolated subformations. 
This baseline method differs from ours in that it directly drives the defenders to positions around the attackers, rather than gradually encircling them from an open-arc formation. 
Although the attackers still execute aggressive evasive maneuvers as the defenders approach, the surrounding task is ultimately accomplished largely due to the defenders' superior maneuverability.

Subsequently, we evaluate all the methods under the more challenging scenario (Condition 2), where the attackers possess superior maneuverability. 
As illustrated in Fig.~\ref{comparation}(a)--(c), with the increase in the attackers' maneuverability, all the baselines fail to secure the protected area. 
This failure fundamentally stems from the fact that the baselines tend to perform surrounding actions too early. 
Specifically, the defenders prematurely approach the attackers before establishing a robust and continuous spatial encirclement. 
Consequently, the highly maneuverable attackers easily widen the gap, and then exploit the vast open space to bypass and leave the defenders behind. 
Ultimately, $D_{\rm P}$ gradually decreases to zero, indicating that the attackers have breached the protected area.

Conversely, our method achieves robust success by fundamentally transforming a speed-dependent pursuit into {geometry-dependent containment.}
Furthermore, our method achieves reliable herding even when the observed information regarding the attackers is noisy. 
This resilience is primarily attributed to a persistent concave barrier interposed between the attackers and the protected area. 
Rather than directing the defenders to rapidly minimize their distance to the attackers, which may provoke aggressive and unpredictable evasive maneuvers, our method uses the concave geometry to exploit the attackers' inherent intent to intrude into the protected area.
Driven by this intent, the attackers naturally navigate deeper into the arc-shaped formation. 
By systematically restricting their spatial freedom without triggering extreme evasive reactions, the defenders smoothly transform this concave barrier into a closed topological trap, fully compensating for their inherent maneuverability disadvantage.

In summary, the comparative results demonstrate the superiority of our method, which can effectively compensate for the limited maneuverability of defenders. 

\subsection{Scalability Analysis}
In this work, both the defensive formation and the attacking swarm are encoded using the parametric representation. 
Consequently, the learned policy can generalize to large-scale scenarios with varying numbers of defenders and attackers. 
To validate this advantage, we conduct a series of simulations involving dozens of defenders and attackers. 
Specifically, Figs.~\ref{simulation3}(a)--(d) illustrate the following simulations: 15 defenders vs. 10 attackers; 30 defenders vs. 15 attackers; 40 defenders vs. 15 attackers; and 50 defenders vs. 20 attackers.
The simulation results demonstrate that the learned policy successfully completes the task across varying group sizes. 
During the defending phase, the attacking swarm is ultimately confined within the closed-loop formation, while in the guiding phase, the formation maintains its shape and size as all the attackers are guided to the designated safe area. 
\begin{figure*}[t]
    \centering
    \includegraphics[width=0.98\textwidth]{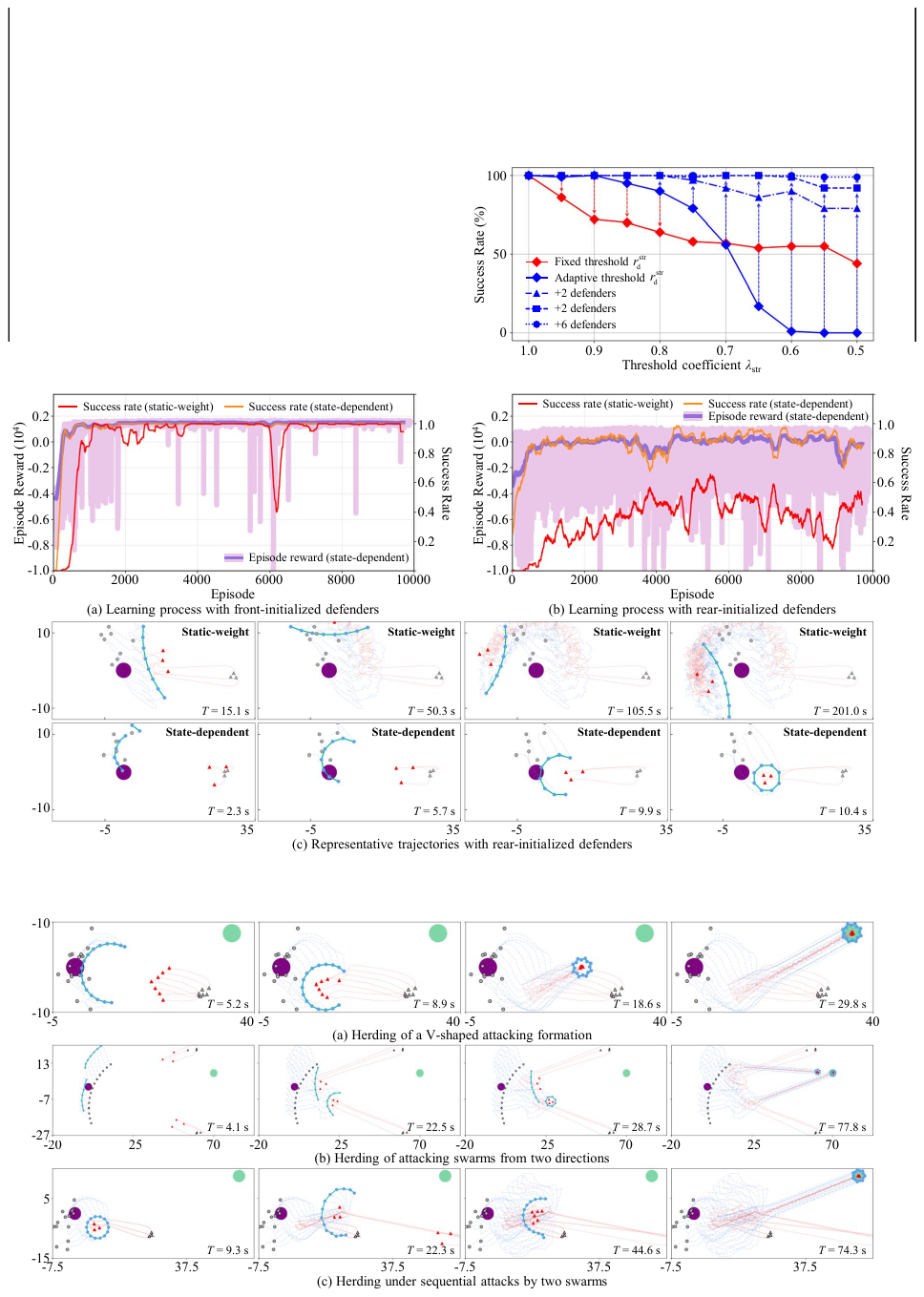}
    \caption{Robustness evaluation results. 
    The learned policy is further tested under three representative attack patterns.}
    \label{diverse_attack}
\end{figure*}

In addition to the four representative simulations shown above, we further conduct 100 randomized trials for each group configuration in Fig.~\ref{simulation3}. 
The success rates and runtime statistics are summarized in Table~\ref{tab:runtime}. 
The reported runtime is measured during online execution and excludes visualization and file I/O. 
For each group scale, the average runtime per step is calculated by dividing the average runtime per episode by the average number of execution steps. 
As shown in Table~\ref{tab:runtime}, the average runtime per step remains below $2.3\,\mathrm{ms}$ across all tested group scales. 
This computation time is far below the control period used in our real-robot implementation at $15\,{\mathrm{Hz}}$, i.e., approximately $66.7\,{\mathrm{ms}}$. 
The corresponding real-time ratio is therefore below $3.4\%$, demonstrating that our method satisfies real-time execution requirements. 
This efficiency mainly stems from the fixed-dimensional parametric representation, which enables the actor network to perform formation-level decision-making in a state space whose dimension does not increase with the number of robots.

\begin{table}[!t]
\centering
\footnotesize
\setlength{\tabcolsep}{6pt}
\renewcommand{\arraystretch}{1.12}
\caption{Runtime and Success Statistics Under Different Group Scales.}
\label{tab:runtime}
\begin{tabular}{ccccc}
\toprule
\makecell[c]{Group\\scale}
& \makecell[c]{Avg. steps\\per episode}
& \makecell[c]{Avg. runtime\\per episode ($\mathrm{s}$)}
& \makecell[c]{Avg. runtime\\per step ($\mathrm{ms}$)}
& \makecell[c]{Success\\rate (\%)} \\
\midrule
15D+10A & 515 & 0.760 & 1.475 & 99 \\
30D+15A & 394 & 0.892 & 2.264 & 97 \\
40D+15A & 375 & 0.447 & 1.191 & 99 \\
50D+20A & 463 & 0.501 & 1.082 & 97 \\
\bottomrule
\end{tabular}
\end{table}
Meanwhile, our method achieves success rates of no less than $97\%$ across all tested group scales and exhibits strong scalability. 
In the remaining failure cases, the outermost attacker may stay slightly outside the closed formation, although the formation geometry has already indicated a successful encirclement. 
This suggests that large-scale deployment requires a small safety margin between the commanded formation scale and the actual robot-level execution. 
In our implementation, we multiply $\zeta$ by a factor of $1.1$ to mitigate this issue. 
With this adjustment, the defensive formation provides wider effective coverage, and the corresponding outcome can change from failure to successful herding. 

\subsection{Robustness Evaluation under Different Attacking Patterns}
To further evaluate the robustness of the learned policy, additional simulations are conducted under three organized attacking patterns, as shown in Fig.~\ref{diverse_attack}: a V-shaped formation, two sub-swarms approaching simultaneously from different directions, and two sub-swarms appearing sequentially.

In the first case, the V-shaped formation enlarges the spatial distribution of the attackers and thereby increases the difficulty of encirclement. 
As shown in Fig.~\ref{diverse_attack}(a), the defensive formation correspondingly expands, progressively surrounds the attackers, and guides them toward the safe area.

In the second case, two attacking swarms approach the protected area from different directions, requiring the defenders to handle multiple spatially separated threats simultaneously. 
As shown in Fig.~\ref{diverse_attack}(b), we first divide the defenders into two subgroups according to the spatial distribution of the attackers. 
Each subgroup independently adjusts its configuration based on the policy to herd its assigned attacking swarm. 
The results show that the policy remains applicable at the subgroup level and enables both attacking swarms to be successfully herded.

In the third case, two attacking swarms appear sequentially, which introduces temporal uncertainty into the herding process. 
After the first swarm is surrounded, the appearance of the second swarm changes the overall threat distribution. 
The formation opening is therefore adjusted to $\pi/3$, allowing the defenders to guide the first swarm toward the newly appearing swarm while preventing it from escaping. 
Once the two swarms merge, they are treated as a single integrated swarm, and the learned policy is applied to the merged group. 
As shown in Fig.~\ref{diverse_attack}(c), the defensive formation closes again and surrounds all attackers. 
This result demonstrates that the learned policy can respond not only to spatially different attacking patterns but also to sequentially appearing threats.

These results indicate that our method is not limited to the nominal attacking pattern used during policy learning. 
Instead, the learned policy captures the geometric relationship between the two groups by representing them in a parametric manner. 
Therefore, even when the attackers exhibit different behaviors, the policy remains effective. 

\begin{figure*}[t]
    \centering
    \includegraphics[width=1.0\textwidth]{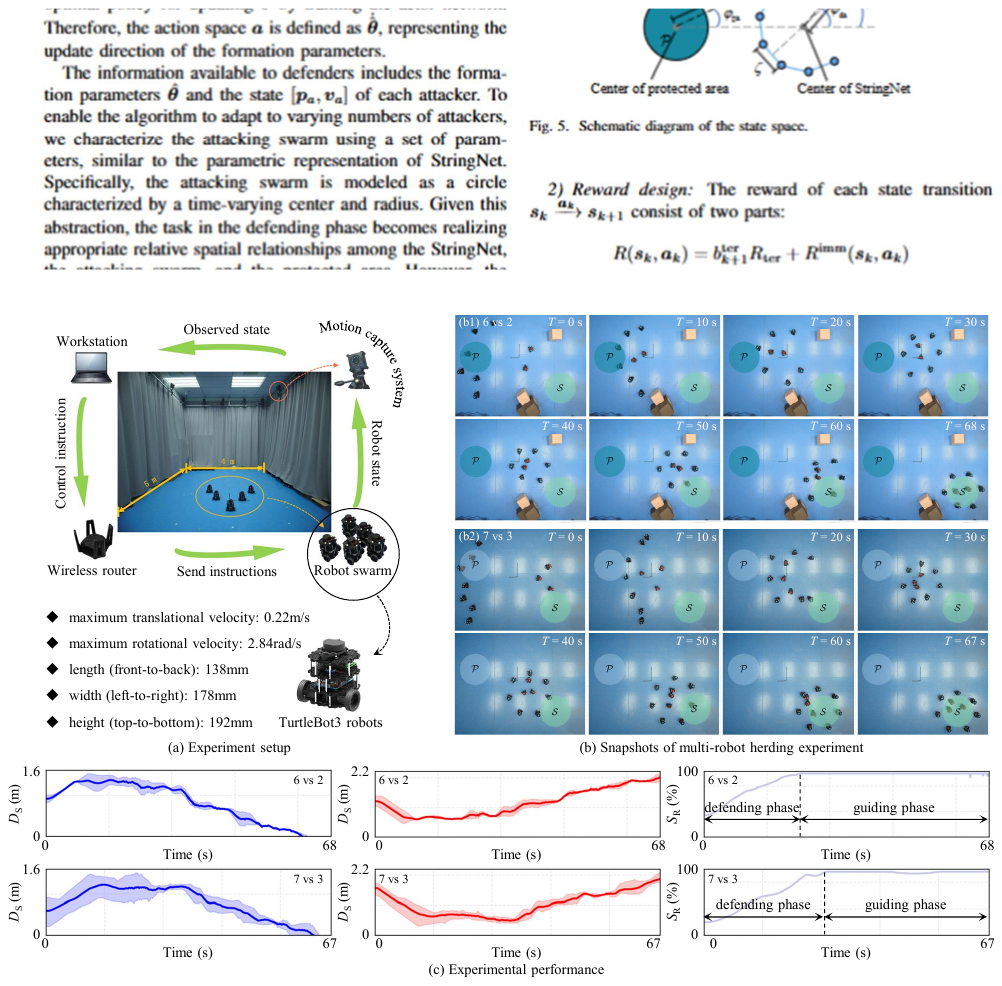}
   \caption{Experimental setup and results of the multi-robot herding validation.}
     \label{exp}
\end{figure*}

\section{Experiment Results}\label{sec:experiments}
To further demonstrate the practical effectiveness of our method, we conduct experiments on an indoor platform consisting of 10 TurtleBot3 robots (7 defenders and 3 attackers). 

\subsection{Experiment Setting}
As shown in Fig.~\ref{exp}(a), the experiment platform measures 4 {$\mathrm{m}$} $\times$ 5 {$\mathrm{m}$} and is equipped with a motion capture system to acquire the poses of the robots in real time. 
Since the TurtleBot3 robot is subject to nonholonomic kinematic constraints, the control law $\bm{u}_i$, designed for an omnidirectional robot model, is converted into the linear and angular velocity commands $v_i$ and $\omega_i$, respectively, following the method in~\cite{Zhao2018General}.
Accordingly, the control commands are computed as
$v_i=[\cos\vartheta_i,\sin\vartheta_i]\bm{u}_i$
and
$\omega_i=[-\sin\vartheta_i,\cos\vartheta_i]\bm{u}_i$,
where $\vartheta_i$ denotes the heading angle of the $i$th robot.
In addition, the maximum linear and angular velocities of the defenders are set to ${v^{\rm d}_{\max}} = 0.15$ $\mathrm{m/s}$ and ${\omega^{\rm d}_{\max}} = 0.8$ $\mathrm{rad/s}$, respectively, while those of {attackers} are set to ${v^{\rm a}_{\max}} = 0.2$ $\mathrm{m/s}$ and ${\omega^{\rm a}_{\max}} = 1.5$ $\mathrm{rad/s}$. 
The remaining parameters are kept consistent with those used in the simulation. 
These commands are then transmitted to the robots via the ROS~2 network at 15 {$\mathrm{Hz}$}. 
Our tests show that the computation time for these commands under distributed conditions is no more than 3 {$\mathrm{ms}$}, which is far below the approximately $66.7\,\mathrm{ms}$ control period.

To facilitate visual identification, {the defender robots} are marked with blue indicators, while {the attacker robots} are marked with red indicators.
In addition to $D_{\rm S}$ and $D_{\rm P}$, we define the surrounding ratio $S_{\rm R}$ to better evaluate the herding performance in terms of the defensive formation's enclosure relative to the attackers.

\subsection{Multi-Robot Herding Experiment}
As shown in Fig.~\ref{exp}(b), two experimental scenarios are designed to validate our method. 
In these two scenarios, the defenders are initially positioned on the left, while the attackers are located on the right. 
The cyan region on the left represents the protected area, and the green region on the right denotes the safe area.

The first scenario demonstrates a herding task where 6 defenders coordinate to herd 2 attackers within an environment containing obstacles. 
Throughout the experiment, the attackers persistently attempt to breach the protected area. 
Despite being initialized behind the protected area, the defenders rapidly establish a defensive line midway between the attackers and the protected area during the initial interval of 0--10 {$\mathrm{s}$}.
As the attackers advance toward the protected area, the defenders coordinate to push the defensive formation forward to intercept them.
Once the surrounding advantage is sufficiently enlarged, the two defenders at the ends move toward each other to form a closed-loop formation at 23 {$\mathrm{s}$} to surround the attackers. 
As a result, the defensive formation successfully transports them to the safe area by 68 {$\mathrm{s}$}. 
The performance metrics in the first experiment are illustrated in the upper panel of Fig.~\ref{exp}(c). 
As observed, although the attackers initially approach the protected area, they are effectively herded to the safe area and kept at a safe distance from the protected area. 
Furthermore, when approaching obstacles, the formation proactively executes a collective avoidance maneuver to preserve its geometric rigidity. This strategy ensures that the surrounding ratio remains consistently at 100 \%, thereby guaranteeing robust task execution.

In comparison to the first experiment, we scale up the group sizes in the second to validate the generalization of our method. 
Specifically, this experiment involves 7 defenders coordinating to herd 3 attackers. 
As presented in Fig.~\ref{exp}(b), the expansion of the group sizes inevitably prolongs the convergence time of the defensive formation. 
However, due to the establishment of a preliminary defensive barrier, the attackers are consistently prevented from breaching the {defensive line.}  
This provides a temporal window for the defenders at the two ends to execute outflanking maneuvers. 
Consequently, a closed-loop formation is achieved around 30 {$\mathrm{s}$}, indicating the transition from the defending phase to the guiding phase. 
Finally, the attackers are successfully herded to the safe area at 67 {$\mathrm{s}$}.

These experiments demonstrate the strong potential of our method for practical herding applications. 
The video of these experiments can be found in the supplementary material. 
In addition to the two experiments discussed above, four extra experimental trials are provided in the video to further validate the performance and effectiveness of our method.

\section{Conclusion}\label{sec:conclusion}
In this paper, we proposed a task-oriented formation decision method for MRSs, with a focus on the herding task. 
First, a parametric representation was introduced to encode the formation shape, which transformed the formation decision task into a parameter optimization problem. 
Second, we developed an RL-based policy to adjust these parameters based on task requirements. 
The learned policy determined an appropriate formation shape based on the available observations. 
Finally, we compared our method with three baselines~\cite{deng2020multi, Chen2024RAL, chen2024multipleherding} through 30 simulation trials and validated it in physical experiments involving 10 TurtleBot3 robots. 
The results show that our method can effectively compensate for the defenders' limited maneuverability, even when the attackers' strategies are unknown. 
Nevertheless, the proposed method still has several limitations. 
Although the negotiation protocol can tolerate moderate packet losses with sufficient local connectivity, sparse communication may still lead to inconsistent parameter estimates and local formation distortion. 
In future work, we will develop a fully distributed formation-decision and herding method to reduce dependence on centralized sensing.

\appendices
\section{Control Law for Individual Robots} \label{Control law}
Given the determined vector $\bm{\theta}$, the defenders are required to {be distributed} evenly within the formation shape. 
To prevent potential inter-robot collisions when the commanded inter-defender spacing satisfies $\zeta < r_{\rm d1}^{\rm safe}$, we propose a safety-aware distribution mapping $\mathcal{F}: \bm{\theta} \rightarrow \mathcal{P}^{*} = \{\bm{p}_i^{*}\}_{i=1}^{N_{\rm{d}}}$. 
This mapping yields a two-layer distribution. 
For the two-layer distribution, the adjacent defenders in the innermost layer should satisfy the collision-avoidance requirement. 
Therefore, when $\zeta<r_{\rm d1}^{\rm safe}/2$, the value of $\zeta$ is set to $r_{\rm d1}^{\rm safe}/2$. 
Accordingly, the distribution mapping is defined as:
\begin{equation*}
    \hat{\bm{q}}_i = 
    \begin{cases}
    \bm{q}_i +(-1)^i\Delta
    \begin{bmatrix} 
    \cos\left(\psi_i + \frac{\pi}{2}\right) \\ 
    \sin\left(\psi_i + \frac{\pi}{2}\right) 
    \end{bmatrix}, 
    & \zeta < r_{\rm d1}^{\rm safe}, \\[2mm]
    \bm{q}_i, 
    & \zeta \geq r_{\rm d1}^{\rm safe}.
    \end{cases}
\end{equation*}
Here, $\Delta = \frac{1}{2}\sqrt{(r_{\rm d1}^{\rm safe})^2 - \zeta^2}$ represents the normal offset required to maintain the safety distance $r_{\rm d1}^{\rm safe}$. The base relative positions $\bm{q}_i$ are recursively generated by:
\begin{equation*}
    \bm{q}_i = 
    \begin{cases}
         \bm{0}, & i=1  \\
        \bm{q}_{i-1} - \zeta \begin{bmatrix} \cos{\psi_i} \\ \sin{\psi_i} \end{bmatrix}, & i=2,3,\dots,N_{\rm{d}}
    \end{cases}
\end{equation*}
where $\psi_i = \varphi + \frac{\pi}{2} + \dfrac{(2i-N_{\rm{d}}-2)\beta}{2(N_{\rm{d}}-1)}$.

Building on our previous work in~\cite{chen2023collision}, once defender $i$ estimates its desired position, it computes its control law $\bm{u}_i$:
\begin{equation*}
    \bm{u}_i = \bm{u}_i^{\rm f}  + \bm{u}_i^{\rm a}+ \bm{u}_i^{\rm c}
    \label{Equ_control_law}
\end{equation*}
where $\bm{u}_i^{\rm f}$, $\bm{u}_i^{\rm a}$, and $\bm{u}_i^{\rm c}$ represent terms contributed by formation maintenance, collision avoidance, and connectivity preservation, respectively. 

The first term $\bm{u}_i^{\rm f}$ is detailed as follows:
\begin{equation*}
    \bm{u}_i^{\rm f} = k_{\rm p}(\bm{p}_i^{*}-\bm{p}_i) + k_{\rm v}(\dot{\bm{p}}_{\rm c}-\bm{v}_i)
\end{equation*}
where constants $k_{\rm p}$ and $k_{\rm v}$ denote the weights for eliminating the position and velocity {errors}, respectively. 
Each defender can derive the velocity of the formation center, ${\dot{\bm{p}}_{\rm c}}$, based on its estimate of the formation parameters, $\hat{\bm{\theta}}_i$. 

The second term $\bm{u}_i^{\rm a}$ is detailed as follows: 
\begin{equation*}
\begin{aligned}
    \bm{u}_i^{\rm a}
    =
    -k_{\rm a}
    \sum_{j\in \mathcal{O}\cup \mathcal{N}_i}
    \frac{\partial g\left(r_{ij},r_{\rm d1}^{\rm safe},r_{\rm d2}^{\rm safe}\right)}
    {\partial \bm{p}_i}
\end{aligned}
\end{equation*}
where constant $k_{\rm a}$ denotes the weight for obstacle and inter-agent collision avoidance, and $\mathcal{N}_i = \{j \in \mathcal{V}_{\rm d} : (i,j) \in \mathcal{E}\}$ indicates the neighbor set. 
Notably, this term allows the formation to temporarily deviate from its desired configuration to avoid collisions.
The first term ensures rapid convergence back to the desired configuration after the avoidance maneuvers. 

The third term $\bm{u}_i^{\rm c}$ is responsible for connectivity preservation in distributed systems. 
To achieve this, the defender network is represented by an undirected graph $\mathcal{G} = (\mathcal{V}_{\rm{d}},\mathcal{E})$, where the edge set $\mathcal{E} = \{(i,j) : i,j \in \mathcal{V}_{\rm{d}},\Vert\bm{p}_i - \bm{p}_j\Vert \leq r_{\rm d}^{\rm com} \}$ represents communication links, and $r_{\rm d}^{\rm com}$ denotes the communication range of defenders. 
The adjacency matrix of $\mathcal{G}$ is defined as $\bm{A}(\mathcal{G}) = \{a_{ij}\}_{N_{\rm{d}}\times N_{\rm{d}}}$, where $a_{ij}=h(\Vert\bm{p}_i - \bm{p}_j\Vert, r_{\rm d}^{\rm com})$ is calculated using the following function:
\begin{equation*}
    h(r,r_0) = \begin{cases}
        \dfrac{1}{2} \left[1+\cos\left(\pi\dfrac{r}{r_0}\right)\right], & r\leq r_0 \\
        0, & r>r_0
    \end{cases}
\end{equation*}
when $\Vert\bm{p}_i-\bm{p}_j\Vert$ approaches $r_{\rm d}^{\rm com}$, $a_{ij}$ tends to 0. 
This design reflects the continuous attenuation characteristics of signal strength with distance in practice.
The Laplacian matrix {is defined as} $\bm{L}(\mathcal{G}) = \bm{D}(\mathcal{G}) - \bm{A}(\mathcal{G})$, where $\bm{D}(\mathcal{G}) = {\rm diag}\{\sum_{j=1}^{N_{\rm{d}}} a_{ij}\}$.
The graph $\mathcal{G}$ is connected if the second-smallest eigenvalue $\lambda_2(\mathcal{G})>0$~\cite{Fiedler1973}. Otherwise, the graph is disconnected. 
As a result, the term $\bm{u}_i^{\rm c}$ is detailed as follows: 
\begin{equation*}
    \bm{u}_i^{\rm c} = - k_{\rm c}\sum_{\mathcal{C}\in\mathcal{Q}_i} \dfrac{\partial g\left(\lambda_2(\mathcal{G}_{\mathcal{C}}),\delta_1,\delta_2\right)}{\partial \lambda_2(\mathcal{G}_{\mathcal{C}})} \dfrac{\partial \lambda_2(\mathcal{G}_{\mathcal{C}})}{\partial \bm{p}_i}
\end{equation*}
where constant $k_{\rm c}$ denotes the weight for connectivity preservation, and $\mathcal{Q}_i$ denotes the maximal cliques set associated with defender $i$. 
Physically, each clique $\mathcal{C} \in \mathcal{Q}_i$ represents a local, fully connected communication sub-swarm where all involved defenders can mutually interact. 
The fundamental principle of $\bm{u}_i^{\rm c}$ is to guarantee global network integrity by rigidly maintaining these physical sub-swarms. 
For a detailed derivation and theoretical analysis of $\bm{u}_i^{\rm c}$, the reader is referred to our previous work~\cite{chen2023collision}.

\section{Theoretical Analysis}
\subsection{Proof of Theorem 1} \label{proof1}
Define the Lyapunov function
\[
    V = \frac{1}{2}\sum_{i\in\mathcal{V}_{\rm d}} 
    \Vert \hat{\bm{\theta}}_i-\bm{\theta}^* \Vert^2.
\]
Its derivative is
\[
    \dot{V}
    = \sum_{i\in\mathcal{V}_{\rm d}}
    \big(\hat{\bm{\theta}}_i-\bm{\theta}^{*})^\top 
    \dot{\hat{\bm{\theta}}}_i.
\]
Substituting~\eqref{Equ_negotiation},
\[
\begin{aligned}
    \dot{V}
    =& \sum_{i}
    (\hat{\bm{\theta}}_i-\bm{\theta}^{*})^\top\pi_i(\bm{s}_i)
    + c_{\theta}(\bm{\theta}^*)^\top 
    \sum_{i}\sum_{j\in\mathcal{N}_i}(\hat{\bm{\theta}}_j - \hat{\bm{\theta}}_i) \\
    &+ c_{\theta}\sum_{i}\sum_{j\in\mathcal{N}_i}
    \hat{\bm{\theta}}_i^\top(\hat{\bm{\theta}}_j - \hat{\bm{\theta}}_i).
\end{aligned}
\]

The first term is non-positive. 
Since the graph is undirected,
\[
    \sum_{i}\sum_{j\in\mathcal{N}_i}
    (\hat{\bm{\theta}}_j - \hat{\bm{\theta}}_i)=\bm{0},
\]
so the second term is zero.
For the third term,
\[
\begin{aligned}
    &c_{\theta}\sum_{(i,j)\in\mathcal{E},i<j}
    \left[
        \hat{\bm{\theta}}_i^\top(\hat{\bm{\theta}}_j - \hat{\bm{\theta}}_i)
        + \hat{\bm{\theta}}_j^\top(\hat{\bm{\theta}}_i - \hat{\bm{\theta}}_j)
    \right]\\
    &=-c_{\theta}
    \sum_{(i,j)\in\mathcal{E},i<j}
    \Vert \hat{\bm{\theta}}_i - \hat{\bm{\theta}}_j \Vert^2
    \le 0.
\end{aligned}
\]

Thus $\dot{V}\le 0$, and $\dot{V}=0$ if $\hat{\bm{\theta}}_i=\bm{\theta}^*$ for all defenders.  
Therefore, $\lim\limits_{t\rightarrow\infty} \hat{\bm{\theta}}_i = \bm{\theta}^{*}, \forall i \in \mathcal{V}_{\rm d}$. 

\subsection{Complexity Analysis} \label{Complexity}
Here, we analyze the computational complexity of our method to verify its real-time feasibility. 
Crucially, the computational load per robot {does not directly scale with} the total number of robots ($N_{\rm d}$ and $N_{\rm a}$).
This exceptional scalability fundamentally stems from two core designs: a fixed-dimensional state representation and a {distributed} communication topology. 
The detailed analysis of the two main computational components, actor network inference and the negotiation protocol, is as follows.

During the online phase, each defender locally generates {an} action using the deployed actor network. 
Specifically, the computational complexity of forward inference is $O(\sum_{l=1}^{L-1} n_l n_{l+1})$, where $L$ is the total number of layers and $n_l$ is the number of neurons in layer $l$. 
For this actor network, the complexity is $O(12 \times 128 + 128 \times 128 + 128 \times 5)$.
This calculation scales independently of the group sizes and simplifies to a small constant time complexity of $O(1)$.

Following the inference process, the defenders use a negotiation protocol to reach a consensus on their estimated formation parameters. 
The protocol requires each defender $i$ to communicate with its neighbors $j \in \mathcal{N}_i$ to compute the consensus term $c_{\theta}\sum_{j\in\mathcal{N}_i}(\hat{\bm \theta}_j - \hat{\bm \theta}_i)$. 
The complexity of this step for defender $i$ is proportional to its number of neighbors, yielding $O(|\mathcal{N}_i|)$. 
Since our system operates in a distributed manner, each defender interacts solely with its immediate neighbors. 
Therefore, the neighbor count $|\mathcal{N}_i|$ remains a bounded small integer that does not grow with the group size $N_{\rm d}$.

In summary, the forward inference of the neural network and the local negotiation protocol impose time complexities of $O(1)$ and $O(|\mathcal{N}_i|)$, respectively. 
Therefore, the overall online computational load remains strictly bounded and remarkably low, which readily satisfies real-time control requirements.

\section{DDPG Framework}\label{DDPG_framework}
The DDPG framework consists of three main components:
\begin{itemize}
\item[$\bullet$] Actor network $\bm{\pi}$.
The actor network is a fully connected neural network consisting of four layers with 12, 128, 128, and 5 neurons, respectively, where the input and output dimensions correspond to the state and action spaces. 
ReLU activation functions are used in the hidden layers, and a Tanh activation is applied to the output layer, followed by action scaling to satisfy the environment-specific action bounds. 
\item[$\bullet$] Critic network $\bm{q}$. 
The critic network is also fully connected and consists of four layers with {17, 128, 128, and 1 neurons, respectively.}
Its input is the concatenated state-action vector, and its output is a scalar Q-value. 
\item[$\bullet$] Target networks $\bm{\pi}_{\rm {tar}}$ and $\bm{q}_{\rm {tar}}$.
The target networks share the same architectures as the actor and critic networks, respectively, and are updated using soft updates to stabilize training. 
\end{itemize}

The loss functions are defined as: 
\begin{equation*}
\begin{cases}
J^{\rm q} = \sum_{\mathcal{B}_k \in \mathcal{B}} 
\big(\bm{q}(\bm{s}_k,\bm{a}_k)-\gamma \bm{q}_{\rm {tar}}(\bm{s}_{k+1}, \bm{\pi}_{\rm {tar}}(\bm{s}_{k+1}))-r \big)^2\\
J^{\rm \pi} = -\sum_{\mathcal{B}_k\in\mathcal{B}} \bm{q}\big(\bm{s}_{k},\bm{\pi}(\bm{s}_{k})\big)\\
\end{cases}
\end{equation*}
where $\gamma$ denotes the discount factor. 
In addition, the network parameters are updated according to: 
\begin{equation*}
\begin{cases}
\bm q^{t+1} = \bm q^t - \alpha^{\rm q}\nabla_{\bm q} J^{\rm q},\\
\bm \pi^{t+1} = \bm \pi^t - \alpha^{\rm \pi}\nabla_{\bm \pi} J^{\rm \pi},\\
\bm q^{t+1}_{\rm tar} = \tau \bm q^{t+1} + (1-\tau) \bm q^{t}_{\rm tar},\\
\bm \pi^{t+1}_{\rm tar} = \tau \bm \pi^{t+1} + (1-\tau) \bm \pi^{t}_{\rm tar},
\end{cases}
\end{equation*}
where $\alpha^{\rm q}$ and $\alpha^{\rm \pi}$ are the learning rates of the critic and actor networks, respectively. $\tau\ll 1$ is the soft update coefficient.

\section{Scale-Adaptive Policy Transfer Mechanism}\label{policy_transfer}
The objective is to enable a policy trained with a fixed number of robots to be transferable to scenarios with varying group sizes. 
Since both adversarial groups are represented as formation shapes with specific geometric characteristics, this transfer can be achieved through shape scaling. 
Accordingly, we introduce a scale-adaptive policy transfer mechanism based on shape-scaling. 
The online deployment includes two alternating steps: policy inference and physical execution. 
First, for policy inference, this mechanism maps scenarios with different group sizes into an equivalent representation at the learning scale, which is then fed into the actor network. 
Second, the inferred actions are scaled back to the deployment scale for physical execution. 
Specifically, the scale-related parameters include $\zeta$ and $r_{\rm att}$, for which two parameter groups are defined: $(\zeta^{\pi}, r_{\rm att}^{\pi})$ for policy inference and $(\zeta^{\rm dep}, r_{\rm att}^{\rm dep})$ for physical execution. 
Only two scaling operations are involved: $r_{\rm att}^{\rm dep} \rightarrow r_{\rm att}^{\pi}$ before policy inference and $\zeta^{\pi} \rightarrow \zeta^{\rm dep}$ during physical execution. 
The specific steps are presented as follows.

\subsubsection{Scaling for Policy Inference}
For an attacking swarm, the key geometric variable is its radius $r_{\rm att}$. 
From the perspective of formation geometry, the formation characteristic most closely related to $r_{\rm att}$ is the defensive arc length $L_{\rm d} = N_{\rm d}\zeta$. 
To match the scale of $r_{\rm att}$ with that of $\zeta^{\pi}$, the radius fed into the actor network is scaled as 
\begin{equation*}
r_{\rm att}^{\pi}
=
\frac{N_{\rm d}^{\pi}}{N_{\rm d}} r_{\rm att}^{\rm dep}
\end{equation*}
where $N_{\rm d}^{\pi}$ and $N_{\rm d}$ denote the numbers of defenders in learning and deployment, respectively. 

\subsubsection{Scaling for Physical Execution}
After the nominal parameter $\zeta^{\pi}$ is updated, the $N_{\rm d}$ defenders are required to physically realize an arc-shaped defensive barrier with a nominal length of $N_{\rm d}^{\pi}\zeta^{\pi}$.
Therefore, the scale parameter used to calculate the desired positions of defenders in Appendix A is given by
\begin{equation*}
\zeta^{\rm dep}
=\frac{N_{\rm d}^{\pi}}{N_{\rm d}}\zeta^{\pi}.
\end{equation*}
This operation enables the pre-trained policy to effectively generalize to scenarios with similar group sizes. 

However, in large-scale group scenarios, the actual attacking swarm can be much larger than the reference one used in the policy learning environment. 
In this case, the nominal defensive arc length determined at the learning scale may provide insufficient physical coverage.
To address this limitation, we introduce an additional factor to restore the physical coverage required in large-scale group scenarios. 
As a result, the scale parameter in large-scale group scenarios is formulated as
\begin{equation*}
\zeta^{\rm dep}
=\chi(r_{\rm att}^{\rm dep})\frac{N_{\rm d}^{\pi}}{N_{\rm d}}\zeta^{\pi}
\end{equation*}
where $\chi(r_{\rm att}^{\rm dep})
=
\max\big(\frac{r_{\rm att}^{\rm dep}}{r_{\rm att}^{\rm ref}},1\big)$ is the additional factor, and $r_{\rm att}^{\rm ref}$ denotes the reference radius of the attacking swarm in policy learning. 
If $r_{\rm att}^{\rm dep}>r_{\rm att}^{\rm ref}$, then $\chi(r_{\rm att}^{\rm dep})>1$, and the defensive arc is enlarged proportionally to recover sufficient physical coverage. 
Otherwise, $\chi(r_{\rm att}^{\rm dep})=1$, so the learning-scale defensive formation is sufficient for coverage and no further enlargement is required. 

\section{Correction Mechanism for Barrier-Threshold Violation}\label{correction}
In Assumption~\ref{Ass_1}, we assume that adjacent defenders create an impassable barrier for the attackers if the distance between them is less than a given threshold $r_{\rm d}^{\rm str}$. 
However, during online deployment, the effective barrier threshold may differ from its assumed value. 
Here, we focus on the more conservative case where the actual barrier threshold $\hat{r}_{\rm d}^{\rm str}$ is smaller than the assumed one, which means that $\hat{r}_{\rm d}^{\rm str}=\lambda_{\rm str} r_{\rm d}^{\rm str}$ and $0<\lambda_{\rm str}<1$. 
To restore the validity of the barrier assumption when $\zeta^{\rm dep}>\hat{r}_{\rm d}^{\rm str}$, additional defenders are required to provide a stronger group advantage and compensate for the reduced effective threshold. 

To implement this idea, a group-advantage factor is introduced as $\rho_{\rm g}=\max\big(\frac{N_{\rm d}}{N_{\rm d}^{\pi}},1\big)$.
When $\zeta^{\rm dep}>\hat{r}_{\rm d}^{\rm str}$, the scale parameter is corrected according to
\begin{equation*}
    \zeta^{\rm cor}
    =
    \min\Big(\frac{\zeta^{\rm dep}}{\rho_{\rm g}\lambda_{\rm str}},
    \hat{r}_{\rm d}^{\rm str}\Big).
\end{equation*}
As a result, the corrected spacing always satisfies $\zeta^{\rm cor}\leq \hat{r}_{\rm d}^{\rm str}$, which ensures that the barrier assumption remains valid under the reduced effective threshold. 
However, the correction mechanism cannot fully compensate for severe violations of the barrier-threshold assumption. 
When $\lambda_{\rm str}$ is too small, the effective barrier threshold is overly restrictive, and the defensive formation no longer has sufficient spatial coverage to establish an impassable barrier. 
In this case, additional defenders are required to increase the available group advantage and restore sufficient defensive coverage. 
The performance of the correction mechanism is presented in Section~\ref{subsec:ablation}. 

\bibliographystyle{IEEEtran}
\bibliography{reference}

@article{Gedefaw2025UAVReview,
  author  = {Gedefaw, Elisabeth Andarge and Abera, Nardos Belay and Abdissa, Chala Merga},
  title   = {A Review of Modeling and Control Techniques for Unmanned Aerial Vehicles},
  journal = {Eng. Rep.},
  volume  = {7},
  number  = {6},
  year    = {2025},
  note    = {{Art.} no. e70215}
}

@article{Guo2024JFR,
  author  = {Guo, Junlong and Li, Yakuan and Huang, Bo and Ding, Liang and Gao, Haibo and Zhong, Ming},
  title   = {An online optimization escape entrapment strategy for planetary rovers based on Bayesian optimization},
  journal = {J. Field Robot.},
  volume  = {41},
  number  = {8},
  pages   = {2518-2529},
  month   = {Dec.},
  year    = {2024}
}

@article{Ma2023Automatica,
  title   = {Intentional delay can benefit consensus of second-order multi-agent systems},
  journal = {Automatica},
  volume  = {147},
  year    = {2023},
  month   = {Jan.},
  issn    = {0005-1098},
  author  = {Qian Ma and Shengyuan Xu},
  note    = {{Art}. no. 110750}
}

@article{Fu2025AST,
  title   = {Adaptive safety attitude control of a hybrid {VTOL} {UAV} under transition flight subject to multiple faults and uncertainties},
  journal = {Aerosp. Sci. Technol.},
  volume  = {163},
  year    = {2025},
  month   = {Aug.},
  author  = {Yifang Fu and Ban Wang and Huimin Zhao and Mengqi Zhou and Ni Li and Zhenghong Gao},
  note    = {{Art.} no. 110284}
}

@article{Chen2025IoTJ,
  author  = {Chen, Jin and Li, Ming and Marcantoni, Matteo and Jayawardhana, Bayu and Wang, Yafei},
  journal = {IEEE Internet Things J.},
  title   = {Range-Only Distributed Safety-Critical Formation Control Based on Contracting Bearing Estimators and Control Barrier Functions},
  year    = {2025},
  volume  = {12},
  number  = {19},
  month   = {Oct.},
  pages   = {40968-40979}
}

@article{Liu2026ESWA,
  author  = {Qi Liu and Zhuoyang Song and Yuxin Liang and Zejian Xie and Songxin Zhang and Jiaxing Zhang and Yanjie Li},
  title   = {{CoRLHF}: Reinforcement learning from human feedback with cooperative policy-reward optimization for {LLMs}},
  journal = {Expert Syst. Appl.},
  volume  = {301},
  number  = {10},
  year    = {2026},
  note    = {{Art}. no. 130113}
}

@article{Liu2025KBS,
  title   = {Sample-efficient backtrack temporal difference deep reinforcement learning},
  journal = {Knowl.-Based Syst.},
  volume  = {330},
  year    = {2025},
  number  = {},
  issn    = {0950-7051},
  author  = {Qi Liu and Pengbin Chen and Ke Lin and Kaidong Zhao and Jinliang Ding and Yanjie Li},
  note    = {{Art}. no. 114613}
}

@article{Hu_TCNS,
  author  = {Hu, Jiangping and Chen, Bo and Ghosh, Bijoy Kumar},
  journal = {IEEE Trans. Control Netw. Syst.},
  title   = {Formation–Circumnavigation Switching Control of Multiple ODIN Systems via Finite-Time Intermittent Control Strategies},
  year    = {2024},
  volume  = {11},
  number  = {4},
  month   = {Dec.},
  pages   = {1986-1997}
}

@article{GuWang_UAV,
  title   = {A constrained reinforcement learning based approach for cooperative control of multi-{UAV} in dense obstacle environments.},
  author  = {Gu, Jian and Wang, Yin},
  journal = {Sci. China Technological Sci.},
  year    = {2026},
  month   = {Jan.},
  volume  = {69},
  number  = {1},
  pages   = {298-305}
}

@article{Fiedler1973,
  title   = {Algebraic connectivity of graphs},
  author  = {Miroslav Fiedler},
  journal = {Czech. Math. J.},
  year    = {1973},
  volume  = {23},
  number  = {2},
  pages   = {298-305}
}

@article{Li2025TASE,
  author  = {Li, Lulu and Cherouat, Abel and Snoussi, Hichem and Wang, Tian},
  journal = {IEEE Trans. Autom. Sci. Eng.},
  title   = {Grasping With Occlusion-Aware Ally Method in Complex Scenes},
  year    = {2025},
  volume  = {22},
  number  = {},
  pages   = {5944-5954}
}

@article{TAN2026TIE,
  author  = {Tan, Junkai and Xue, Shuangsi and Guan, Qingshu and Guo, Zihang and Cao, Hui and Chen, Badong},
  journal = {IEEE Trans. Ind. Electron.},
  title   = {Fixed-Time Stochastic Learning From Human-{UAV} Interaction With State-Input Constraints},
  year    = {2026},
  volume  = {73},
  number  = {2},
  month   = {Feb.},
  pages   = {3071-3082}
}

@article{TAN2025TASE,
  author  = {Tan, Junkai and Xue, Shuangsi and Li, Huan and Guo, Zihang and Cao, Hui and Chen, Badong},
  journal = {IEEE Trans. Autom. Sci. Eng.},
  title   = {Hierarchical Safe Reinforcement Learning Control for Leader-Follower Systems With Prescribed Performance},
  year    = {2025},
  volume  = {22},
  number  = {},
  month   = {Aug.},
  pages   = {19568-19581}
}

@article{Chen2024RAL,
  author  = {Chen, Yansong and Wu, Yuchen and Yang, Helei and Cao, Junjie and Wang, Qinqin and Liu, Yong},
  journal = {IEEE Robot. Automat. Lett.},
  title   = {A Distributed Pipeline for Collaborative Pursuit in the Target Guarding Problem},
  year    = {2024},
  volume  = {9},
  number  = {3},
  month   = {Mar.},
  pages   = {2064-2071}
}

@article{Gu2026TCCN,
  author  = {Gu, Jian and Wang, Yin and Ji, Wen and Wei, Zhongxiang and Wang, Jingjing},
  journal = {IEEE Trans. Cognit. Commun. Netw.},
  title   = {{LLM}-Based Dynamic Event-Triggered Communication for Multi-{UAV} Formation Control in Urban Environments},
  year    = {2026},
  volume  = {12},
  number  = {},
  pages   = {4825-4838}
}

@article{Yuan2023FITEE,
  author  = {Yuan, Weilin and Chen, Jiaxing and Chen, Shaofei and Feng, Dawei and Hu, Zhenzhen and Li, Peng and Zhao, Weiwei},
  title   = {Transformer in reinforcement learning for decision-making: a survey},
  journal = {Frontiers Inf. Technol. Electron. Eng.},
  year    = {2024},
  volume  = {25},
  number  = {6},
  pages   = {763-790}
}

@article{sun2025IIEM,
  author  = {Sun, G. and L{\"u}, J. and Liu, K. and Wang, Z. and Chen, G.},
  title   = {Mean-shift theory and its applications in swarm robotics: A new way to enhance the efficiency of multi-robot collaboration},
  journal = {IEEE Ind. Electron. Mag.},
  note    = {early access, Nov. 17, 2025, doi: \href{https://doi.org/10.1109/MIE.2025.3627087}{10.1109/MIE.2025.3627087}}
}

@article{ZHENG2025112216,
  author  = {Canlun Zheng and Yize Mi and Hanqing Guo and Huaben Chen and Zhiyun Lin and Shiyu Zhao},
  title   = {Optimal spatial–temporal triangulation for bearing-only cooperative motion estimation},
  journal = {Automatica},
  volume  = {175},
  month   = {May},
  year    = {2025},
  issn    = {0005-1098},
  note    = {{Art}. no. 112216}
}

@article{Lei2024CSL,
  author  = {Lei, Yangqi and Quan, Quan and She, Zhikun},
  journal = {IEEE Control Syst. Lett.},
  title   = {Mean-Field-Based Density Control for Swarm Robotics Passing-Through a Virtual Tube},
  year    = {2024},
  volume  = {8},
  number  = {},
  pages   = {3500-3505}
}

@article{SINIGAGLIA2025112218,
  title   = {Robust optimal density control of robotic swarms},
  journal = {Automatica},
  volume  = {176},
  year    = {2025},
  issn    = {0005-1098},
  author  = {Carlo Sinigaglia and Andrea Manzoni and Francesco Braghin and Spring Berman},
  note    = {{Art.} no.  112218}
}

@article{Thomas2020dilevery,
  author  = {Thayer, Thomas C. and Vougioukas, Stavros and Goldberg, Ken and Carpin, Stefano},
  journal = {IEEE Trans. Autom. Sci. Eng.},
  title   = {Multirobot Routing Algorithms for Robots Operating in Vineyards},
  year    = {2020},
  volume  = {17},
  number  = {3},
  month   = {Jul.},
  pages   = {1184-1194}
}

@article{sun2023mean,
  title     = {Mean-shift exploration in shape assembly of robot swarms},
  author    = {Sun, Guibin and Zhou, Rui and Ma, Zhao and Li, Yongqi and Gro{\ss}, Roderich and Chen, Zhang and Zhao, Shiyu},
  journal   = {Nature Commun.},
  volume    = {14},
  number    = {1},
  month     = {Jun.},
  year      = {2023},
  publisher = {Nature Publishing Group UK London},
  note      = {{Art}. no. 3476}
}

@article{chen2024multipleherding,
  author  = {Chen, Yanjie and Zhang, Zhixing and Wu, Zheng and Wu, Yangning and He, Bingwei and Zhang, Hui and Wang, Yaonan},
  journal = {IEEE Trans. Autom. Sci. Eng.},
  title   = {Multiple Mobile Robots Planning Framework for Herding Non-Cooperative Target},
  year    = {2024},
  volume  = {21},
  number  = {4},
  month   = {Oct.},
  pages   = {7363-7378}
}

@article{Li2025DEFORM,
  author  = {Li, Jin and Xu, Yang and Shi, Xiufang and Li, Liang},
  journal = {IEEE Robot. Automat. Lett.},
  title   = {DEFORM: Adaptive Formation Reconfiguration of Multi-Robot Systems in Confined Environments},
  year    = {2025},
  volume  = {10},
  number  = {5},
  month   = {May},
  pages   = {4706-4713}
}

@article{yang2022autonomous,
  title     = {Autonomous environment-adaptive microrobot swarm navigation enabled by deep learning-based real-time distribution planning},
  author    = {Yang, Lidong and Jiang, Jialin and Gao, Xiaojie and Wang, Qinglong and Dou, Qi and Zhang, Li},
  journal   = {Nature Mach. Intell.},
  volume    = {4},
  number    = {5},
  pages     = {480--493},
  month     = {May},
  year      = {2022},
  publisher = {Nature Publishing Group UK London}
}

@article{Zhao2018Affine,
  author  = {Zhao, Shiyu},
  journal = {IEEE Trans. Autom. Control.},
  title   = {Affine Formation Maneuver Control of Multiagent Systems},
  year    = {2018},
  volume  = {63},
  number  = {12},
  month   = {Dec.},
  pages   = {4140-4155}
}

@article{ZHANG2025111935,
  author  = {Zhang, Xiaozhen and Yang, Qingkai and Xiao, Fan and Chen, Jie},
  title   = {Linear formation control of multi-agent systems},
  journal = {Automatica},
  volume  = {171},
  month   = {Jan.},
  year    = {2025},
  issn    = {0005-1098},
  note    = {{Art.} no. 111935}
}

@article{quan2023robust,
  author  = {Quan, Lun and Yin, Longji and Zhang, Tingrui and Wang, Mingyang and Wang, Ruilin and Zhong, Sheng and Zhou, Xin and Cao, Yanjun and Xu, Chao and Gao, Fei},
  journal = {IEEE Trans. Robot.},
  title   = {Robust and Efficient Trajectory Planning for Formation Flight in Dense Environments},
  year    = {2023},
  volume  = {39},
  number  = {6},
  month   = {Dec.},
  pages   = {4785-4804}
}

@article{huang2024cooperative,
  author  = {Huang, Yunke and Zhang, Shuai},
  journal = {IEEE Robot. Automat. Lett.},
  title   = {Cooperative Object Transport by Two Robots Connected With a Ball-String-Ball Structure},
  year    = {2024},
  volume  = {9},
  number  = {5},
  pages   = {4313-4320},
  month   = {May}
}

@article{Xing2024UAV,
  author  = {Xing, Xiaojun and Zhou, Zhiwei and Li, Yan and Xiao, Bing and Xun, Yilin},
  journal = {IEEE Trans. Veh. Technol.},
  title   = {Multi-{UAV} Adaptive Cooperative Formation Trajectory Planning Based on an Improved {MATD3} Algorithm of Deep Reinforcement Learning},
  year    = {2024},
  volume  = {73},
  number  = {9},
  pages   = {12484-12499}
}

@article{fiorini1998motion,
  title     = {Motion planning in dynamic environments using velocity obstacles},
  author    = {Fiorini, Paolo and Shiller, Zvi},
  journal   = {Int. J. Rob. Res},
  volume    = {17},
  number    = {7},
  pages     = {760--772},
  year      = {1998},
  publisher = {Sage Publications Sage CA: Thousand Oaks, CA}
}

@article{yang2023self,
  title     = {Self-organized polygon formation control based on distributed estimation},
  author    = {Yang, Qingkai and Xiao, Fan and Lyu, Jingshuo and Zhou, Bo and Fang, Hao},
  journal   = {IEEE Trans. Ind. Electron.},
  volume    = {71},
  number    = {2},
  pages     = {1958--1967},
  month     = {Feb.},
  year      = {2024},
  publisher = {IEEE}
}

@article{Zhao2018General,
  author  = {Zhao, Shiyu and Dimarogonas, Dimos V. and Sun, Zhiyong and Bauso, Dario},
  journal = {IEEE Trans. Autom. Control},
  title   = {A General Approach to Coordination Control of Mobile Agents With Motion Constraints},
  year    = {2018},
  volume  = {63},
  number  = {5},
  pages   = {1509-1516},
  month   = {May}
}

@article{Liu2025Observer,
  author  = {Liu, Yangyang and Liu, Chun and Meng, Yizhen and Jiang, Bin and Wang, Xiaofan},
  journal = {IEEE Trans. Autom. Sci. Eng.},
  title   = {Observer-Based Multi-Agent Reinforcement Learning for Pursuit-Evasion Game With Multiple Unknown Uncertainties},
  year    = {2025},
  volume  = {22},
  number  = {},
  month   = {Apr.},
  pages   = {14332-14345}
}

@article{2021Feedback,
  author  = {Selvakumar, Jhanani and Bakolas, Efstathios},
  journal = {IEEE Trans. Cybern.},
  title   = {Feedback Strategies for a Reach-Avoid Game With a Single Evader and Multiple Pursuers},
  year    = {2021},
  volume  = {51},
  number  = {2},
  month   = {Feb.},
  pages   = {696-707}
}

@article{dong2018theory,
  title     = {Theory and experiment on formation-containment control of multiple multirotor unmanned aerial vehicle systems},
  author    = {Dong, Xiwang and Hua, Yongzhao and Zhou, Yan and Ren, Zhang and Zhong, Yisheng},
  journal   = {IEEE Trans. Autom. Sci. Eng.},
  volume    = {16},
  number    = {1},
  pages     = {229--240},
  month     = {Jan.},
  year      = {2019},
  publisher = {IEEE}
}

@article{Unmannedsystem1,
  author          = {Lian, Bosen and Nguyen, Nhan T. and Lewis, Frank L.},
  title           = {Distributed Cluster Containment Control of Swarms via Differential
                     Game-Theoretic Learning},
  journal         = {Unmanned Syst.},
  year            = {2025},
  volume          = {13},
  number          = {05},
  pages           = {1283-1294},
  month           = {Sep.},
  doi             = {10.1142/S2301385025440017},
  earlyaccessdate = {AUG 2025},
  issn            = {2301-3850},
  eissn           = {2301-3869}
}

@article{deng2020multi,
  title     = {Multi-agent cooperative pursuit-defense strategy against one single attacker},
  author    = {Deng, Ziquan and Kong, Zhaodan},
  journal   = {IEEE Robot. Autom. Lett.},
  volume    = {5},
  number    = {4},
  pages     = {5772--5778},
  month     = {Oct.},
  year      = {2020},
  publisher = {IEEE}
}

@article{song2021herding,
  title     = {Herding by caging: a formation-based motion planning framework for guiding mobile agents},
  author    = {Song, Haoran and Varava, Anastasiia and Kravchenko, Oleksandr and Kragic, Danica and Wang, Michael Yu and Pokorny, Florian T and Hang, Kaiyu},
  journal   = {Auton. Robots},
  volume    = {45},
  pages     = {613--631},
  year      = {2021},
  publisher = {Springer}
}

@article{pierson2017controlling,
  title     = {Controlling noncooperative herds with robotic herders},
  author    = {Pierson, Alyssa and Schwager, Mac},
  journal   = {IEEE Trans. Robot.},
  volume    = {34},
  number    = {2},
  pages     = {517--525},
  month     = {Apr.},
  year      = {2018},
  publisher = {IEEE}
}

@article{Liu2024Graph,
  author  = {Liu, Wenhang and Hu, Jiawei and Zhang, Heng and Wang, Michael Yu and Xiong, Zhenhua},
  journal = {IEEE Trans. Robot.},
  title   = {A Novel Graph-Based Motion Planner of Multi-Mobile Robot Systems With Formation and Obstacle Constraints},
  year    = {2024},
  volume  = {40},
  number  = {},
  pages   = {714-728}
}

@article{chipade2021multiagent,
  title     = {Multiagent planning and control for swarm herding in 2-D obstacle environments under bounded inputs},
  author    = {Chipade, Vishnu S and Panagou, Dimitra},
  journal   = {IEEE Trans. Robot.},
  volume    = {37},
  number    = {6},
  pages     = {1956--1972},
  month     = {Dec.},
  year      = {2021},
  publisher = {IEEE}
}

@article{chen2023collision,
  title     = {Collision-Free Formation Control With Global Network Integrity Maintenance via Preserving Induced-Subgraph Connectivity},
  author    = {Chen, Jinyong and Zhou, Rui and Sun, Guibin and Zhang, Jie},
  journal   = {IEEE Syst. J.},
  year      = {2023},
  volume    = {17},
  number    = {3},
  pages     = {4078-4089},
  month     = {Apr.},
  publisher = {IEEE}
}

@book{zhao2024mathematical,
  title     = {Mathematical Foundations of Reinforcement Learning},
  author    = {Zhao, Shiyu},
  year      = {2024},
  publisher = {Springer},
  address   = {Singapore},
  isbn      = {978-981-99-3668-3}
}

\begin{IEEEbiography}[{\includegraphics[width=1in,height=1.25in,clip,keepaspectratio]{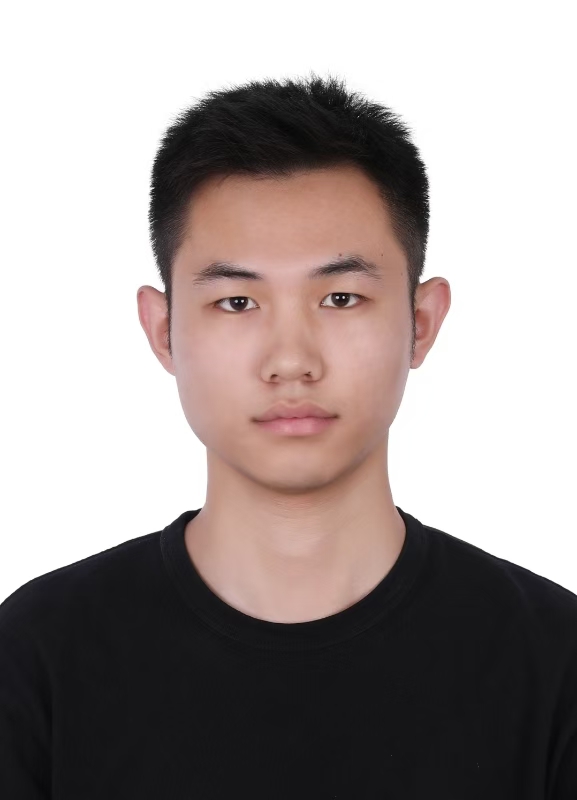}}]{Zhaozong Wang} received the B.S. degree from the University of Science and Technology Beijing, Beijing, China, in 2021, and the M.S. degree from Beihang University, Beijing, in 2023, where he is currently pursuing the Ph.D. degree with the School of Automation Science and Electrical Engineering.

His current research interests include task assignment of multi-agent systems and multi-agent reinforcement learning.
\end{IEEEbiography}

\begin{IEEEbiography}[{\includegraphics[width=1in,height=1.25in,clip,keepaspectratio]{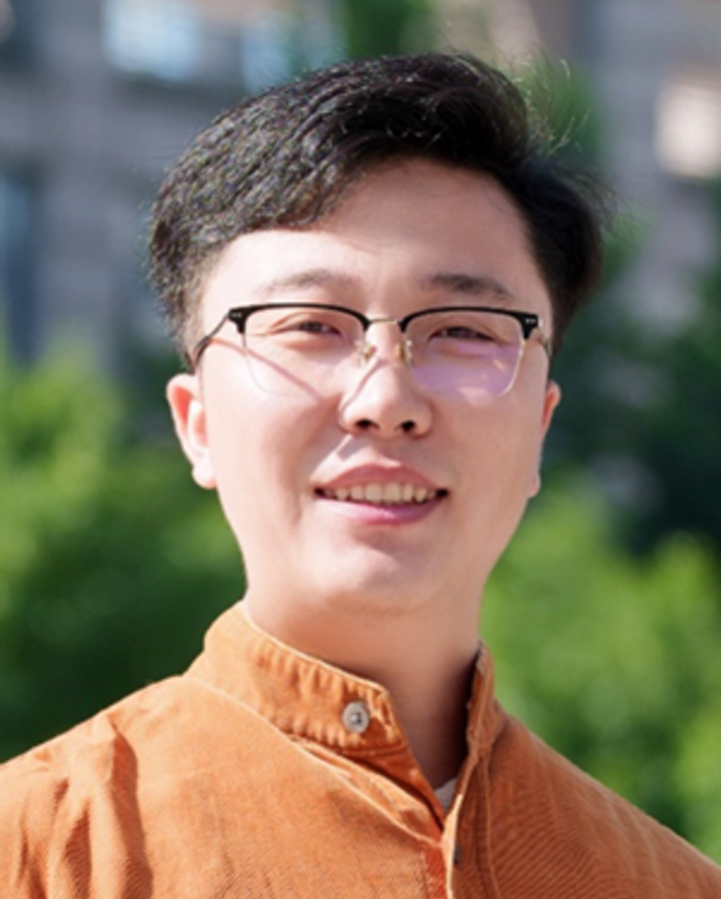}}]{Guibin Sun} received the B.Eng. degree in automation from Inner Mongolia University, China, in 2017, and the Ph.D. degree in control science and engineering from Beihang University, Beijing, China, in 2022. 

From 2022 to 2025, he was a Postdoctoral Research Fellow of “Zhuoyue” Program with Beihang University. From 2021 to 2022, he was a Visiting Student with Westlake University, Hangzhou, China. He is currently an Associate Professor with the School of Automation Science and Electrical Engineering, Beihang University. He has authored more than 20 research papers in international journals, including Nature Communications, IEEE Magazine, and IEEE Transactions. His research interests include theories and applications of robot swarms, especially for large-scale swarms. 

Dr. Sun was the Session Chair for IROS 2024. 
\end{IEEEbiography}

\begin{IEEEbiography}[{\includegraphics[width=1in,height=1.25in,clip,keepaspectratio]{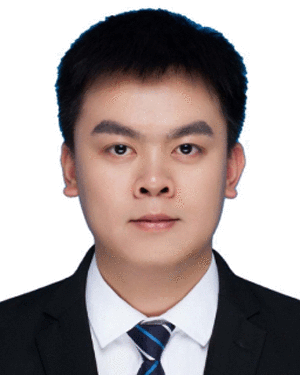}}]{Jinyong Chen} received the B.E. and M.E. degrees from Beihang University, Beijing, China, in 2017 and 2020, respectively. 
He received the Ph.D. degree in control science and engineering from Beihang University, Beijing, China, in 2024. 
His current research interests include coordinated control theory and reinforcement learning, and AIOps. 
\end{IEEEbiography}
\vskip 0pt plus -1fil

\begin{IEEEbiography}[{\includegraphics[width=1in,height=1.25in,clip,keepaspectratio]{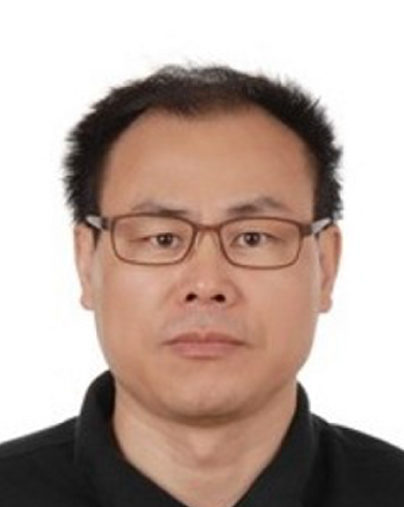}}]{Rui Zhou} received the Ph.D. degree in spacecraft design from Harbin Institute of Technology, Harbin, China, in 1997.

He is currently a Professor and a Ph.D. Supervisor of guidance, navigation and control at Beihang University, Beijing, China. 
His research interests include autonomous control theory, intelligent decision systems, coordinated control and guidance, and information fusion for networked systems.
\end{IEEEbiography}

\end{document}